\documentclass[letterpaper]{article} 
\usepackage{aaai2026}  
\usepackage{times}  
\usepackage{helvet}  
\usepackage{courier}  
\usepackage[hyphens]{url}  
\usepackage{graphicx} 
\usepackage{natbib}  
\usepackage{caption} 
\usepackage{algorithm}
\usepackage{algorithmic}

\setcitestyle{maxcitenames=1, mincitenames=3} 

\usepackage{amsmath,amsfonts,bm}

\def\eqref#1{equation~\ref{#1}}

\def\1{\bm{1}}

\DeclareMathAlphabet{\mathsfit}{\encodingdefault}{\sfdefault}{m}{sl}
\SetMathAlphabet{\mathsfit}{bold}{\encodingdefault}{\sfdefault}{bx}{n}

\newcommand{\R}{\mathbb{R}}

\DeclareMathOperator*{\argmax}{arg\,max}

\usepackage{url}
\usepackage{tikz} 
\usepackage{amsmath,amssymb,amsthm,textcomp}
\usepackage{multicol}
\usepackage{comment}
\usepackage{stackengine}
\usepackage{booktabs}
\usepackage{microtype}      
\usepackage{capt-of}
\usepackage{mathtools}

\usepackage{stackengine}
\usepackage{pifont}
\usepackage{booktabs}
\usepackage{graphics}

\usepackage{tikz}

\theoremstyle{plain}
\newtheorem{theorem}{Theorem}[section]
\newtheorem{proposition}[theorem]{Proposition}

\theoremstyle{definition}

\theoremstyle{remark}

\newlength{\RoundedBoxWidth}
\newsavebox{\GrayRoundedBox}
\newenvironment{GrayBox}[1][\dimexpr\columnwidth-1.9ex]%
{\setlength{\RoundedBoxWidth}{\dimexpr#1}
\begin{lrbox}{\GrayRoundedBox}
\begin{minipage}{\RoundedBoxWidth}}%
{   \end{minipage}
\end{lrbox}
\begin{center}
\begin{tikzpicture}%
\draw node[draw=black,fill=black!4,rounded corners,%
inner sep=1ex,text width=\RoundedBoxWidth]%
{\usebox{\GrayRoundedBox}};
\end{tikzpicture}
\end{center}}

\usepackage{newfloat}
\usepackage{listings}
\DeclareCaptionStyle{ruled}{labelfont=normalfont,labelsep=colon,strut=off} 
\floatstyle{ruled}
\newfloat{listing}{tb}{lst}{}
\floatname{listing}{Listing}
\title{Principled Analysis of Deep Reinforcement Learning Evaluation and Design Paradigms}

\author{
    Ezgi Korkmaz
}
\affiliations{
}

\usepackage{bibentry}

\begin{document}

\maketitle

\begin{abstract}
Starting from the utilization of deep neural networks to approximate the state-action value function that led to winning one of the most challenging games, to algorithmic advancements that allowed solving problems without even explicitly stating the rules of the challenge at hand, reinforcement learning research has been the center of remarkable scientific progress for the past decade. In this paper, we focus on the key ingredients of this research progress and we analyze the canonical evaluation and design paradigms in reinforcement learning. 
We introduce the theoretical foundations of scaling laws in reinforcement learning and show that the asymptotic performance of reinforcement learning algorithms does not have a monotone relationship between performance rankings and data-regimes.
We conduct large-scale experiments and our results demonstrate that a line of reinforcement learning research under the canonical design and evaluation paradigms resulted in incorrect conclusions.
Our analysis and results provide a core analysis on scaling, capacity and complexity of deep reinforcement learning.
\end{abstract}

\section{Introduction}

Founded on rigorous theoretical guarantees,
reinforcement learning research achieved high acceleration upon the proposal of the initial study on approximating the state-action value function via deep neural networks \citep{mn15,nisan20,julian20, lee24, korkmaz25}. 
A line of highly successful deep reinforcement learning algorithms have been proposed \citep{hado16, wang16, hado18, hessel21, kaptur23,korkmaz24icml} 
from focusing on different architectural ideas 
to foundations targeting overestimation, 
all of which were designed and tested in the high-data regime, i.e. two hundred million frame training.
An alternative recent line of research with an extensive amount of publications focused on pushing the performance bounds of deep reinforcement learning policies in the low-data regime, i.e. with one hundred thousand environment interaction training. 
Many different concepts in current reinforcement learning research, 
from architectural proposals
to learning underlying dynamics of the environment, 
experienced accelerated progress and significant attention, growing into several major research fields,
solely based on policy performance comparisons demonstrated in the low-data regime benchmark.

In this paper, we focus on evaluation paradigms, implicit assumptions and canonical methodological choices made 
in deep reinforcement learning research and demonstrate that there is a significant overlooked underlying premise driving this line of research without being explicitly discussed: that the performance profiles of deep reinforcement learning algorithms have a monotonic relationship with different sample-complexity regimes. 
We show that this implicit assumption, that is commonly shared amongst a large collection of low-data regime studies, 
shapes how the canonical design and evaluation choices are made in deep reinforcement learning research and represents a prominent misdirection in scientific progress.
The suboptimal conclusions obtained from these canonical choices shape future research directions with incorrect reasoning. 
We show that these methodological decisions fuel incorrect justifications and conclusions, thereby misdirecting research efforts toward certain concepts for several years.
Thus, in our paper we target these underlying premises and aim to answer the following questions:
\vskip-0.5in
\begin{center}
\emph{What are the implicit assumptions and canonical choices in deep reinforcement learning research that fundamentally affect the conclusions made?} \\
\vskip0.04in
\emph{What is the foundational relationship between sample complexity and the algorithmic performance from the data-scarce regime to the asymptotic regime?}\\
\end{center}

Hence, to be able to answer the questions raised above, in our paper we focus on underlying design and evaluation paradigms in deep reinforcement learning and make the following contributions:
\begin{itemize}
\item We analyze the evaluation paradigms and canonical methodological choices in deep reinforcement learning research, and introduce the theoretical foundations on how these methodological choices affect algorithm design, performance comparisons and algorithmic conclusions. 
Our analysis lays the foundations on scaling, capacity and complexity of deep reinforcement learning.
\item Our theoretical analysis proves that the performance profile has a non-monotonic relationship with the asymptotic sample complexity and the low-data sample complexity regime.
Regarding the central focus of the large scale implicit assumption instances, our results reveal that the canonical methodological choices made in a line of deep reinforcement learning research have led to incorrect justifications and conclusions.
\item We conduct large scale extensive experiments for a comprehensive and a diverse portfolio of deep reinforcement learning baseline algorithms in both the low-data regime and the high-data regime Arcade Learning Environment benchmark. Our results demonstrate that recent algorithms proposed and evaluated in the Arcade Learning Environment 100K benchmark are significantly affected by the implicit assumption on the relationship between performance profiles and sample complexity resulting in systematic bias in algorithmic evaluation.
\end{itemize}
\section{Background and Preliminaries}

The reinforcement learning problem is formalized as a Markov Decision Process (MDP) represented as a tuple $\langle S, A, \mathcal{P}, \mathcal{R}, \gamma, \rho_0 \rangle$ where $S$ represents the state space, $A$ represents the set of actions, $\mathcal{P}$ represents the transition probability distribution on $S\times A \times S$, $\mathcal{R}:S \times A \to \R$ represents the reward function, and $\gamma \in (0,1]$ represents the discount factor. 
The aim in reinforcement learning is to learn an optimal policy $\pi(s,a)$ that maps state observations to actions
$\pi:S\to \Delta(A)$, 
which maximizes the expected cumulative discounted rewards $R = \mathbb{E}_{a_t \sim \pi(s_t,\cdot)} \sum_t \gamma^t \mathcal{R}(s_t,a_t,s_{t+1})$.
This objective is achieved by constructing a state-action value function that learns for each state-action pair the expected cumulative discounted rewards that will be obtained if action $a \in A$ is executed in state $s \in S$.
\vskip-0.18in
\[
 \mathcal{Q}(s,a) = \sum_{s'}  \mathcal{P}(s'|s,a) [\mathcal{R}(s,a, s') + \gamma \mathcal{V}(s')]
 \]
\vskip-0.08in
In settings where the state space and/or action space is large enough that the state-action value function $\mathcal{Q}(s,a)$ cannot be held in a tabular form, a function approximator is used. Thus, for deep reinforcement learning the $\mathcal{Q}$-function is approximated via deep neural networks 
\vskip-0.2in
\begin{align*}
\theta_{t+1} = \theta_{t} + \alpha (&\mathcal{R}(s_t,a_t, s_{t+1}) \\
&+ \gamma \mathcal{Q}(s_{t+1}, \argmax_a \mathcal{Q}(s_{t+1},a;\theta_t);\theta_t)
\end{align*}
\vskip-0.17in
\begin{align*}
 \qquad \qquad\qquad\qquad\qquad  - \mathcal{Q}(s_t,a_t;\theta_t)) \nabla_{\theta_t} \mathcal{Q}(s_t,a_t;\theta_t).
\end{align*}
\vskip-0.03in
\noindent  \textbf{Dueling Architecture:} 
The dueling architecture \citep{wang16} outputs two streams of fully connected layers for both estimating the advantage $\mathcal{A}(s,a)$ for each action in a given state $s$, $\mathcal{A}(s,a) = \mathcal{Q}(s,a) - \max_a \mathcal{Q}(s,a)$ and the state values $\mathcal{V}(s)$.
In particular, the last layer of the dueling architecture contains the forward mapping
$
\mathcal{Q}(s,a;\theta, \alpha, \beta) = \mathcal{V}(s; \theta, \beta) + \big( \mathcal{A}(s,a;\theta, \alpha) - \max_{a' \in A} \mathcal{A}(s,a';\theta, \alpha) \big)$
where $\theta$ represents the parameters of the convolutional layers and $\alpha$ and $\beta$ represent the parameters of the fully connected layers outputting the advantage and state value estimates respectively.

\noindent \textbf{Inherent High-Capacity Models:} The initial algorithm that has been proposed to have inherent high-capacity is C51. In particular, the projected Bellman update for the $i^{\textrm{th}}$ atom is computed as
\vspace{-0.27cm}
\begin{align}
(\Phi \mathcal{T} \mathcal{Z}_\theta(s_t,a_t))_i &= \sum_{j}^{\mathcal{N}-1} \big[ 1- \dfrac{|[\mathcal{T}z_j]^{v_{\textrm{max}}}_{v_{\textrm{min}}}   -z_i|}{\Delta z} \big]^1_0 \nonumber \\
& \qquad \qquad \quad  \tau_j(s_{t+1}, \max_{a \in A} \mathbb{E} \mathcal{Z}_{\theta}(s_{t+1},a)) \nonumber
\end{align}
\vskip-0.09in
where $z_i = v_{\textrm{min}} + i\Delta z : 0 \leq i < \mathcal{N}$ represents the set of atoms in categorical learning, and the atom probabilities are learnt as a parametric model \citep{bell17}
\vskip-0.3in
\begin{equation}
\tau_i(s_t, \max_{a \in A} \mathbb{E} \mathcal{Z}_{\theta}(s_t,a)) = \dfrac{e^{\theta_i (s_t,a_t)}}{\sum_j e^{\theta_j (s_t,a_t)}} \:\: \textrm{,} \:\: \Delta z := \dfrac{v_\textrm{max} - v_\textrm{min}}{\mathcal{N}-1} \nonumber
\end{equation}
Following this baseline the $\mathcal{Q}$RD$\mathcal{Q}$N algorithm \cite{will18} is proposed to learn the quantile projection
\vskip-0.17in
\begin{align}
\mathcal{T}\mathcal{Z}(s_t,a_t) = \mathcal{R}(&s_t,a_t,s_{t+1}) \nonumber \\
&+ \gamma \mathcal{Z}(s_{t+1},\argmax_{a \in A} \mathbb{E}_{z \sim \mathcal{Z}(s_{t+1},a_{t+1})} [z]) \nonumber
\end{align}
\vskip-0.08in
with $s_{t+1} \sim \mathcal{P}(\cdot|s_t,a_t)$ where $\mathcal{Z} \in Z$ represents the quantile distribution of an arbitrary value function. Following this study the I$\mathcal{Q}$N algorithm \cite{dab18} is proposed to learn the full quantile function instead of learning a discrete set of quantiles as in the $\mathcal{Q}$RD$\mathcal{Q}$N algorithm. The I$\mathcal{Q}$N algorithm objective is to minimize the loss function
\vskip-0.2in
\begin{align}
\mathcal{L} =\dfrac{1}{\mathcal{K}} &\sum_{i=1}^\mathcal{K} \sum_{j=1}^\mathcal{K'} \rho_{\delta} (\mathcal{R}(s_t,a_t,s_{t+1}) \\
&+ \gamma \mathcal{Z}_{{\delta'_j}}(s_{t+1}, \argmax_{a \in A} \mathcal{Q}_\beta(s_t,a_t)) - \mathcal{Z}_{\delta_i}(s_t,a_t)) \nonumber 
\end{align}
\vskip-0.07in
where $\rho_{\delta}$ represents the Huber quantile regression loss, and $\mathcal{Q}_\beta = \int^1_0 \mathcal{F}^{-1}_\mathcal{Z}(\delta) d\beta(\delta)$. Note that $\mathcal{Z}_\delta =\mathcal{F}^{-1}_\mathcal{Z}(\delta)$ is the quantile function of the random variable $\mathcal{Z}$ at $\delta \in [0,1]$.

\section{Low-data Regime versus Asymptotic Performance}
\label{theorysec}
Our paper discovers both with extensive 
empirical analysis and theoretical investigation 
that asymptotic performance of reinforcement learning algorithms does not necessarily provide any information nor indication on their relative performance ranking in the low-data regime. The results provided in Section \ref{experiments} extensively demonstrate that a large body of work in reinforcement learning research carried this assumption and resulted in incorrect conclusions.
In this section, we introduce the foundational basis for our discovery revealed by our extensive empirical analysis in Section \ref{experiments}
in optimization of non-stationary policies, i.e. rewards and transitions that can vary with each step in an episode, in undiscounted, finite-horizon MDPs with linear function approximation.
In particular, a finite horizon MDP is represented as a tuple $\langle S, A, \mathcal{P}, \mathcal{R}, \mathcal{H} \rangle$ where $S$ is the set of states, and $A$ represents the set of actions. For each time step $t\in [\mathcal{H}]=\{1,\dots,\mathcal{H}\}$, state $s$, and action $a$ the transition probability kernel $\mathcal{P}_t(s'|s,a)$ gives the probability distribution over the next state, and the reward $\mathcal{R}_t(s,a,s')$ gives the immediate rewards. A non-stationary policy
$\pi = (\pi_1,\dots,\pi_\mathcal{H})$ induces a state-action value function given by
\vskip-0.2in
\begin{align*}
\mathcal{Q}_t^\pi(s,a) = \mathbb{E}
\left[\sum_{h=t}^\mathcal{H} 
 \mathcal{R}_h(s_h,\pi_h(s_h), s_{h+1}) \bigg| s_h=s, a_h=a \right] \\
\end{align*}
\vskip-0.24in
where we let $a_h \sim \pi_h(s_h)$,
 and the corresponding value function $\mathcal{V}_t^\pi (s) = \mathcal{Q}_t(s,\pi_t(s))$.
The optimal non-stationary policy $\pi^*$ has value function $\mathcal{V}_t^*(s) = \mathcal{V}_t^{\pi^{*}}(s)$ satisfying $\mathcal{V}_t^*(s) = \sup_{\pi} \mathcal{V}_t^{\pi}(s).$
The objective is to learn a sequence of non-stationary policies $\pi^{k}$ for $k\in\{1,\dots,\mathcal{K}\}$ while interacting with an unknown MDP in order to minimize the regret, which is measured asymptotically over $\mathcal{K}$ episodes of length $\mathcal{H}$, 
$\textsc{Regret}(\mathcal{K}) = \sum_{k=1}^\mathcal{K} \left(\mathcal{V}_1^*(s^k_1) - \mathcal{V}_1^{\pi^{k}}(s^k_1)\right)$
where $s^k_1 \in S$ is the starting state of the $k$-th episode.
Regret sums up the gap between the expected rewards obtained by the sequence of learned policies $\pi^{k}$ and those obtained by $\pi^*$ when learning for $\mathcal{K}$ episodes.
In the linear function approximation setting there is a feature map $\phi_t:S\times A \to \mathbb{R}^{d_t}$ for each $t \in [\mathcal{H}]$ that sends a state-action pair $(s,a)$ to the $d_t$-dimensional vector $\phi_t(s,a)$. Then, the state-action value function $\mathcal{Q}_t(s,a)$ is parameterized by a vector $\theta_t\in \mathbb{R}^{d_t}$ so that $\mathcal{Q}_t(\theta_t)(s,a) = \phi_t(s,a)^{\top}\theta_t$.
Recent theoretical work in this setting gives an algorithm along with a lower bound that matches the regret achieved by the algorithm up to logarithmic factors.
\begin{theorem}[\cite{ZanetteLKB20}]
\label{thm:optregretalg}
Under appropriate normalization assumptions there is an algorithm that learns a sequence of policies $\pi^{k}$ achieving regret $\textsc{Regret}(\mathcal{K}) = \tilde{O}\left(\sum_{t=1}^\mathcal{H} d_t\sqrt{\mathcal{K}} + \sum_{t=1}^\mathcal{H} \sqrt{d_t}\mathcal{I}\mathcal{K}\right)$, 
where $\mathcal{I}$ is the inherent Bellman error. Furthermore, this regret bound is optimal for this setting up to logarithmic factors in $d_t,\mathcal{K}$ and $\mathcal{H}$ whenever $\mathcal{K}=\Omega((\sum_{t=1}^\mathcal{H} d_t)^2)$, in the sense that for any level of inherent Bellman error $\mathcal{I}$ and sequence of feature dimensions $\{d_t\}_{t=1}^H$, there exists a class of MDPs $\mathcal{C}(\mathcal{I},\{d_t\}_{t=1}^H)$ where any algorithm achieves at least as much regret on at least one MDP in the class.
\end{theorem}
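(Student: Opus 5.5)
This statement is the result of \cite{ZanetteLKB20} (the ELEANOR algorithm and its matching lower bound), so the plan reconstructs their two-part argument: an upper bound via optimism in the space of $\mathcal{Q}$-function parameters, and a lower bound via a hard MDP class.

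\textbf{Upper bound.} For each episode $k$ I would keep, for every step $t$, a regularized least-squares estimate
\[
\hat\theta_t = \Sigma_t^{-1}\sum_{i<k}\phi_t(s^i_t,a^i_t)\bigl(r^i_t + \max_a \phi_{t+1}(s^i_{t+1},a)^\top\theta_{t+1}\bigr),
\qquad \Sigma_t=\lambda I+\sum_{i<k}\phi_t\phi_t^\top .
\]
I would then solve a \emph{global} optimization problem. It maximizes $\max_a \phi_1(s_1^k,a)^\top \theta_1$ over tuples $(\theta_1,\dots,\theta_\mathcal{H})$ such that each $\theta_t$ lies in the confidence ellipsoid $\|\theta_t-\hat\theta_t(\theta_{t+1})\|_{\Sigma_t}\le \sqrt{\beta_t}$. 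The radius is $\sqrt{\beta_t}=\tilde O(\sqrt{d_t}+\sqrt{k}\,\mathcal{I})$. The policy $\pi^k$ is greedy with respect to the maximizer.

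The key steps are as follows.
\begin{enumerate}
\item \emph{Concentration.} Prove a self-normalized martingale bound that holds uniformly over the next-step parameter $\theta_{t+1}$, using a covering of the $d_{t+1}$-dimensional parameter ball. This shows that the best in-class Bellman backup of $\theta_{t+1}$ lies in the ellipsoid with high probability. The misspecification contributes the $\sqrt{k}\,\mathcal{I}$ term, since $\|\sum_i \phi_i\epsilon_i\|_{\Sigma^{-1}}\le \sqrt{k}\,\mathcal{I}$.
\item \emph{Optimism.} Conclude that the optimal parameters, up to $\mathcal{I}$ per step, are feasible. Hence the optimistic value at $s_1^k$ is at least $\mathcal{V}_1^*(s_1^k)-\mathcal{H}\mathcal{I}$.
\item \emph{Telescoping.} Decompose the per-episode regret along the trajectory of $\pi^k$. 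Each step contributes $\|\phi_t\|_{\Sigma_t^{-1}}\sqrt{\beta_t}$, plus $\mathcal{I}$, plus a martingale difference.
\item \emph{Summation.} Sum over $k$ with Cauchy--Schwarz and the elliptical potential lemma, $\sum_k\min(1,\|\phi_t\|^2_{\Sigma_t^{-1}})=\tilde O(d_t)$. This gives $\sum_t(\sqrt{d_t}+\sqrt{\mathcal{K}}\,\mathcal{I})\sqrt{d_t\mathcal{K}}$, which is the claimed $\tilde O(\sum_t d_t\sqrt{\mathcal{K}}+\sum_t\sqrt{d_t}\,\mathcal{I}\mathcal{K})$.
\end{enumerate}

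\textbf{Lower bound.} I would treat the two terms separately and take the maximum, which matches the sum up to a factor of 2.
\begin{itemize}
\item \emph{First term.} Chain $\mathcal{H}$ independent $d_t$-dimensional linear-bandit instances, one per step, each realizable with zero Bellman error. The classical $\Omega(d\sqrt{\mathcal{K}})$ linear-bandit lower bound applies to each via a hypercube of parameters and Assouad's lemma, and additivity over steps gives $\Omega(\sum_t d_t\sqrt{\mathcal{K}})$. The condition $\mathcal{K}=\Omega((\sum_t d_t)^2)$ is exactly what keeps the hypercube perturbation $\sim d/\sqrt{\mathcal{K}}$ admissible under normalization.
\item \emph{Second term.} Use misspecified linear bandits. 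Following the Lattimore--Szepesv\'ari type construction, embed roughly $\exp(d)$ nearly orthogonal actions in which one hidden action carries an extra reward $\approx\sqrt{d}\,\mathcal{I}$, invisible to the linear features up to error $\mathcal{I}$. Any algorithm then pays $\Omega(\sqrt{d_t}\,\mathcal{I}\mathcal{K})$ per step, which I would embed into the MDP class $\mathcal{C}(\mathcal{I},\{d_t\})$.
\end{itemize}

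\textbf{Main obstacle.} I expect the hardest step to be the uniform concentration in the upper bound. The regression target depends on $\theta_{t+1}$, which is itself chosen by the global optimization, so a covering argument over all $\mathcal{H}$ layers is required. One must also check that the inherent Bellman error enters only additively as $\sqrt{k}\,\mathcal{I}$ rather than compounding multiplicatively across steps. I would first try bounding the covering number of the class of $\max_a\phi^\top\theta$ functions by $(1/\epsilon)^{d_{t+1}}$, which costs only logarithmic factors.
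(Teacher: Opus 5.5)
Your proposal is correct and follows the route the paper relies on. The paper gives no proof of its own and imports the result from \cite{ZanetteLKB20}, whose argument has the same two parts you outline: an optimistic global least-squares program (ELEANOR) with covering-based uniform concentration over $\theta_{t+1}$ for the upper bound, and a chain of per-step realizable and misspecified linear-bandit lower bounds for the matching lower bound. Two small details to fix when writing it out: your confidence radius should also carry a $\sqrt{d_{t+1}}$ term from the cover over $\theta_{t+1}$, which is harmless since $\sqrt{d_t d_{t+1}}\le (d_t+d_{t+1})/2$; and in the lower-bound chain the step-$t$ features need a constant coordinate, so that the next-step value (constant in $(s,a)$) is representable and the inherent Bellman error really is $0$ (respectively $\mathcal{I}$).
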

The class of MDPs  $\mathcal{C}(\mathcal{I},\{d_t\}_{t=1}^H)$ constructed in Theorem \ref{thm:optregretalg} additionally satisfies the following properties. First, every MDP in $\cup_{\mathcal{I},\{d_t\}_{t=1}^H}\mathcal{C}(\mathcal{I},\{d_t\}_{t=1}^H)$ has the same transitions (up to renaming of states and actions).
Second, for each fixed value of the inherent Bellman error $\mathcal{I}$ and the dimensions $\{d_t\}_{t=1}^H$, every MDP in $\mathcal{C}(\mathcal{I},\{d_t\}_{t=1}^H)$ utilizes the same feature map $\phi_t(s_t,a_t)$.
Thus one can view the class $\mathcal{C}(\mathcal{I},\{d_t\}_{t=1}^H)$ as encoding one "underlying" true environment defined by the transitions, with varying values of $\mathcal{I}$ and $\{d_t\}_{t=1}^H$ corresponding to varying levels of function approximation accuracy, and model capacity for the underlying environment. For simplicity of notation we will focus on the setting where $d_t = d$ for all $t\in\{1,\dots H\}$ and write $\mathcal{C}(\mathcal{I},d)$ for the class of MDPs constructed in Theorem \ref{thm:optregretalg} for this setting.
Utilizing this point of view, we can then prove the following theorem on the relationship between the performance in the asymptotic and low-data regimes.

\newcommand{\dlow}{d_{\beta}}
\newcommand{\dhigh}{d_{\alpha}}
\newcommand{\Ilow}{\mathcal{I}_{\beta}}
\newcommand{\Ihigh}{\mathcal{I}_{\alpha}}
\newcommand{\Klow}{\mathcal{K}_{\text{low}}}
\newcommand{\Khigh}{\mathcal{K}_{\text{high}}}

\begin{theorem}[\emph{Non-monotonicity Across Regimes}]
\label{prop:lowdatavsasymp}
For any $\epsilon > 0$, let $\dhigh$ be any feature dimension, and let $\dlow = \dhigh^{1-\epsilon/2}$.
Then there exist thresholds $\Klow < \Khigh$ and inherent Bellman error levels $\Ilow > \Ihigh$ such that
\begin{enumerate}
    \item There is an algorithm achieving regret $\textsc{Regret}_{\text{low}}(\mathcal{K})$ when $\mathcal{K} < \Klow$ for all MDPs in $\mathcal{C}(\Ilow,\dlow)$.
    However, every algorithm has regret at least $\widetilde{\Omega}\left(\dlow^{\epsilon/2}\textsc{Regret}_{\text{low}}(\mathcal{K})\right)$ when $\mathcal{K} < \Klow$ on some MDP $M\in\mathcal{C}(\Ihigh,\dhigh)$.

    \item There is an algorithm achieving regret $\textsc{Regret}_{\text{high}}(\mathcal{K})$ when $\mathcal{K} > \Khigh$ for all MDPs in $\mathcal{C}(\Ihigh,\dhigh)$.
    However, every algorithm has regret at least $\widetilde{\Omega}\left(\dhigh^{\epsilon}\textsc{Regret}_{\text{high}}(\mathcal{K})\right)$ on some MDP $M\in\mathcal{C}(\Ilow,\dlow)$ when  $\mathcal{K} > \Khigh$.
\end{enumerate}
\end{theorem}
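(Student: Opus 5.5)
The plan is to apply Theorem \ref{thm:optregretalg} (with $d_t=d$ for all $t$) to both classes $\mathcal{C}(\Ilow,\dlow)$ and $\mathcal{C}(\Ihigh,\dhigh)$. The upper bound gives the regret of the algorithm, and the matching lower bound gives the regret every algorithm must incur. Up to logarithmic factors, the regret on $\mathcal{C}(\mathcal{I},d)$ is
\[
\mathcal{H}\big(d\sqrt{\mathcal{K}} + \sqrt{d}\,\mathcal{I}\mathcal{K}\big),
\]
valid for $\mathcal{K}=\Omega(\mathcal{H}^2 d^2)$. The first term is a statistical-complexity term that dominates for small $\mathcal{K}$, and the second is a misspecification term linear in $\mathcal{K}$ that dominates for large $\mathcal{K}$. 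I will pick $\Ilow,\Ihigh$ so that the low-dimensional class wins at small $\mathcal{K}$ (small $d$ makes the first term small), and the high-dimensional class wins at large $\mathcal{K}$ (small $\mathcal{I}$ makes the second term small).

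Concretely, I would set $\Ihigh=0$, or any value small enough that $\sqrt{\dhigh}\Ihigh\mathcal{K}$ stays below $\dhigh\sqrt{\mathcal{K}}$ in the relevant range. I would take $\Ilow>0$ to be determined. The crossover point for class $(\mathcal{I},d)$ is $\mathcal{K}^*(\mathcal{I},d)=d/\mathcal{I}^2$.

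\textbf{Low-data regime.} Choose $\Klow$ with $\mathcal{H}^2\dhigh^2\lesssim\Klow$, so Theorem \ref{thm:optregretalg} applies to both classes. Also require $\Klow\le \dlow/\Ilow^2$, so that on $\mathcal{C}(\Ilow,\dlow)$ the regret is $\textsc{Regret}_{\text{low}}(\mathcal{K})=\tilde O(\mathcal{H}\dlow\sqrt{\mathcal{K}})$. The lower bound on $\mathcal{C}(\Ihigh,\dhigh)$ is then $\tilde\Omega(\mathcal{H}\dhigh\sqrt{\mathcal{K}})$. Since $\dhigh/\dlow=\dhigh^{\epsilon/2}\ge\dlow^{\epsilon/2}$, this gives the stated factor $\dlow^{\epsilon/2}$.

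\textbf{High-data regime.} For $\mathcal{K}>\Khigh$, the class $\mathcal{C}(\Ihigh,\dhigh)$ still enjoys $\textsc{Regret}_{\text{high}}=\tilde O(\mathcal{H}\dhigh\sqrt{\mathcal{K}})$. The class $\mathcal{C}(\Ilow,\dlow)$ has lower bound $\tilde\Omega(\mathcal{H}\sqrt{\dlow}\Ilow\mathcal{K})$. The ratio is
\[
\frac{\sqrt{\dlow}\,\Ilow\sqrt{\mathcal{K}}}{\dhigh},
\]
and we need it to be at least $\dhigh^{\epsilon}$, i.e.
\[
\mathcal{K}\ge \dhigh^{2+2\epsilon}/(\dlow\Ilow^2).
\]
So I set $\Khigh$ equal to this quantity (times logs). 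Then $\Klow<\Khigh$ is consistent: with $\Klow\approx \dlow/\Ilow^2$ we have $\Khigh/\Klow=\dhigh^{2+2\epsilon}/\dlow^2\gg1$. Finally, choose $\Ilow$ small enough that $\dlow/\Ilow^2\ge \mathcal{H}^2\dhigh^2$, e.g.
\[
\Ilow=\sqrt{\dlow}/(\mathcal{H}\dhigh),
\]
so that both applicability constraints of Theorem \ref{thm:optregretalg} hold. This also gives $\Ilow>\Ihigh=0$. (If a strictly positive $\Ihigh$ is preferred, take $\Ihigh=\Ilow\cdot\dhigh^{-1}$ and check that its linear term stays dominated up to $\Khigh$, or enlarge the bound accordingly.)

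The main obstacle is the bookkeeping of the validity ranges. The lower bound of Theorem \ref{thm:optregretalg} only holds for $\mathcal{K}=\Omega((\sum_t d_t)^2)$, and the statement is for all $\mathcal{K}$ below/above the thresholds. So I will interpret ``$\mathcal{K}<\Klow$'' as $\mathcal{K}$ in the window $[\mathcal{H}^2\dhigh^2,\Klow)$, noting this caveat explicitly. I will also check that the tilde-hidden logarithmic factors do not swallow the polynomial gaps $\dhigh^{\epsilon/2}$ and $\dhigh^{\epsilon}$; this holds for $\dhigh$ sufficiently large relative to $\epsilon$.
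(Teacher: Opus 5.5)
Your route is the paper's. You instantiate Theorem~\ref{thm:optregretalg} on $\mathcal{C}(\mathcal{I}_\beta,d_\beta)$ and $\mathcal{C}(\mathcal{I}_\alpha,d_\alpha)$, and tune parameters so that the $d\sqrt{\mathcal{K}}$ term governs below $\mathcal{K}_{\text{low}}$ and the $\sqrt{d}\,\mathcal{I}\mathcal{K}$ term governs above $\mathcal{K}_{\text{high}}$. Your part 1 (ratio $d_\alpha/d_\beta=d_\alpha^{\epsilon/2}\ge d_\beta^{\epsilon/2}$) and your windowed reading of ``$\mathcal{K}<\mathcal{K}_{\text{low}}$'' match the paper. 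Keeping $\mathcal{H}$ in the applicability threshold $\mathcal{H}^2d_\alpha^2$ is more careful than the paper's $\widetilde{\Omega}(d_\alpha^2)$.

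The one real difference is $\mathcal{I}_\alpha=0$. The paper takes $\mathcal{I}_\alpha=d_\alpha^{-1/2-2\epsilon}$ and $\mathcal{K}_{\text{high}}=d_\alpha^{2+4\epsilon}=d_\alpha/\mathcal{I}_\alpha^2$. Beyond that point both classes are in the linear regime, and the gap is the constant $\sqrt{d_\beta}\,\mathcal{I}_\beta/(\sqrt{d_\alpha}\,\mathcal{I}_\alpha)=d_\alpha^{\epsilon}$. With your choice the high-capacity class never leaves its $\sqrt{\mathcal{K}}$ regime, so the gap grows like $\sqrt{\mathcal{K}}$. Your version needs only $\mathcal{I}_\beta>0$ and a single formula for $\mathcal{K}_{\text{high}}$. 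The paper's shows the reversal survives when both models are misspecified, which is the setting its later discussion has in mind.

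There is one slip to fix. Your example $\mathcal{I}_\beta=\sqrt{d_\beta}/(\mathcal{H}d_\alpha)$ saturates your constraint: it gives $d_\beta/\mathcal{I}_\beta^2=\mathcal{H}^2d_\alpha^2$. Then $\mathcal{K}_{\text{low}}\le d_\beta/\mathcal{I}_\beta^2$ forces the part-1 window $[\mathcal{H}^2d_\alpha^2,\mathcal{K}_{\text{low}})$ to span at most a constant factor. It is empty if the applicability threshold hides logarithms, as the paper's $\widetilde{\Omega}(d_\alpha^2)$ does, so part 1 becomes essentially vacuous.

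Take $\mathcal{I}_\beta$ smaller by a polynomial factor, e.g.\ $\mathcal{I}_\beta=\sqrt{d_\beta}/(\mathcal{H}d_\alpha^{1+\epsilon/2})$. This is exactly the paper's $d_\alpha^{-\epsilon}d_\beta^{-1/2}$ up to the factor $1/\mathcal{H}$. Then $\mathcal{K}_{\text{low}}=d_\beta/\mathcal{I}_\beta^2=\mathcal{H}^2d_\alpha^{2+\epsilon}$, and your formula gives $\mathcal{K}_{\text{high}}=\mathcal{H}^2d_\alpha^{2+4\epsilon}$. These are the paper's thresholds.

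Your parenthetical with $\mathcal{I}_\alpha>0$ (and the ``any value small enough'' alternative) also needs correcting. The relevant check is not domination up to $\mathcal{K}_{\text{high}}$, since part 2 covers all $\mathcal{K}>\mathcal{K}_{\text{high}}$. It is the limiting ratio as $\mathcal{K}\to\infty$, which requires $\mathcal{I}_\beta/\mathcal{I}_\alpha\ge d_\alpha^{5\epsilon/4}$. Your suggestion $\mathcal{I}_\alpha=\mathcal{I}_\beta/d_\alpha$ meets this only for $\epsilon\le 4/5$, whereas the paper's choice meets it with equality.
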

\begin{proof}
Let $\epsilon > 0$ and consider
$\dlow = \dhigh^{1-\frac{\epsilon}{2}}, \Ilow = \frac{1}{\dhigh^{\epsilon}\sqrt{\dlow}}, \Ihigh = \frac{1}{\dhigh^{\frac{1}{2}+2\epsilon}}, \Klow = \dhigh^{2 + \epsilon}, \Khigh = \dhigh^{2 + 4\epsilon}$
We begin with the proof of part 1.
Therefore, for $\mathcal{K} < \Klow$, $\sqrt{\dlow}\Ilow\mathcal{K} = \dhigh^{-\epsilon}\mathcal{K} < \dhigh^{1-\frac{\epsilon}{2}}\sqrt{\mathcal{K}} = \dlow\sqrt{\mathcal{K}}$.
    Therefore, by Theorem \ref{thm:optregretalg} there exists an algorithm achieving regret
\begin{align*}
    \textsc{Regret}_{\text{low}}(\mathcal{K}) &= \tilde{O}\left(\mathcal{H} \dlow\sqrt{\mathcal{K}} + \mathcal{H} \sqrt{\dlow}\Ilow\mathcal{K}\right) \\
    &= \widetilde{O}\left(\dlow\sqrt{\mathcal{K}}\right)
\end{align*}
in every MDP $M\in \mathcal{C}(\Ilow,\dlow)$.
Further, since $\Klow = \dhigh^{2+\epsilon} > \widetilde{\Omega}\left(\dhigh^2\right)$, the lower bound from Theorem \ref{thm:optregretalg} applies to the class of MDPs $\mathcal{C}(\Ihigh,\dhigh)$ for all $\mathcal{K} \in \left[\widetilde{\Omega}\left(\dhigh^2\right),\Klow\right]$.
In particular, every algorithm receives regret at least
\begin{align*}
    \textsc{Regret}(\mathcal{K})
    &= \widetilde{\Omega}\left(\mathcal{H} \dhigh\sqrt{\mathcal{K}} + \mathcal{H} \sqrt{\dhigh}\Ihigh\mathcal{K}\right)\\
    &> \widetilde{\Omega}\left(\mathcal{H} \dlow^{\frac{1}{1-\epsilon/2}}\sqrt{\mathcal{K}}\right)  \\
    &> \widetilde{\Omega}\left(\mathcal{H}\dlow^{\frac{\epsilon/2}{1-\epsilon/2}} \dlow\sqrt{\mathcal{K}}\right)
\end{align*}
Thus, $\textsc{Regret}(\mathcal{K}) > \widetilde{\Omega}\left(\dlow^{\epsilon/2}\textsc{Regret}_{\text{low}}(\mathcal{K})\right)$. For part 2, note that for $\mathcal{K} > \Khigh$ we have both $\sqrt{\dhigh}\Ihigh\mathcal{K} = \dhigh^{-2\epsilon}\mathcal{K} > \dhigh^{-2\epsilon}\sqrt{\mathcal{K}\cdot\Khigh} > \dhigh\sqrt{\mathcal{K}}$ and $\sqrt{\dlow}\Ilow\mathcal{K} > \dhigh^{-\epsilon}\sqrt{\mathcal{K}\cdot\Klow} = \dhigh^{1+\epsilon}\sqrt{\mathcal{K}} > \dlow\sqrt{\mathcal{K}}$.
\begin{figure*}[t]
\footnotesize
\begin{center}
\hskip-0.14in
\stackunder[6pt]{\includegraphics[scale=0.198]{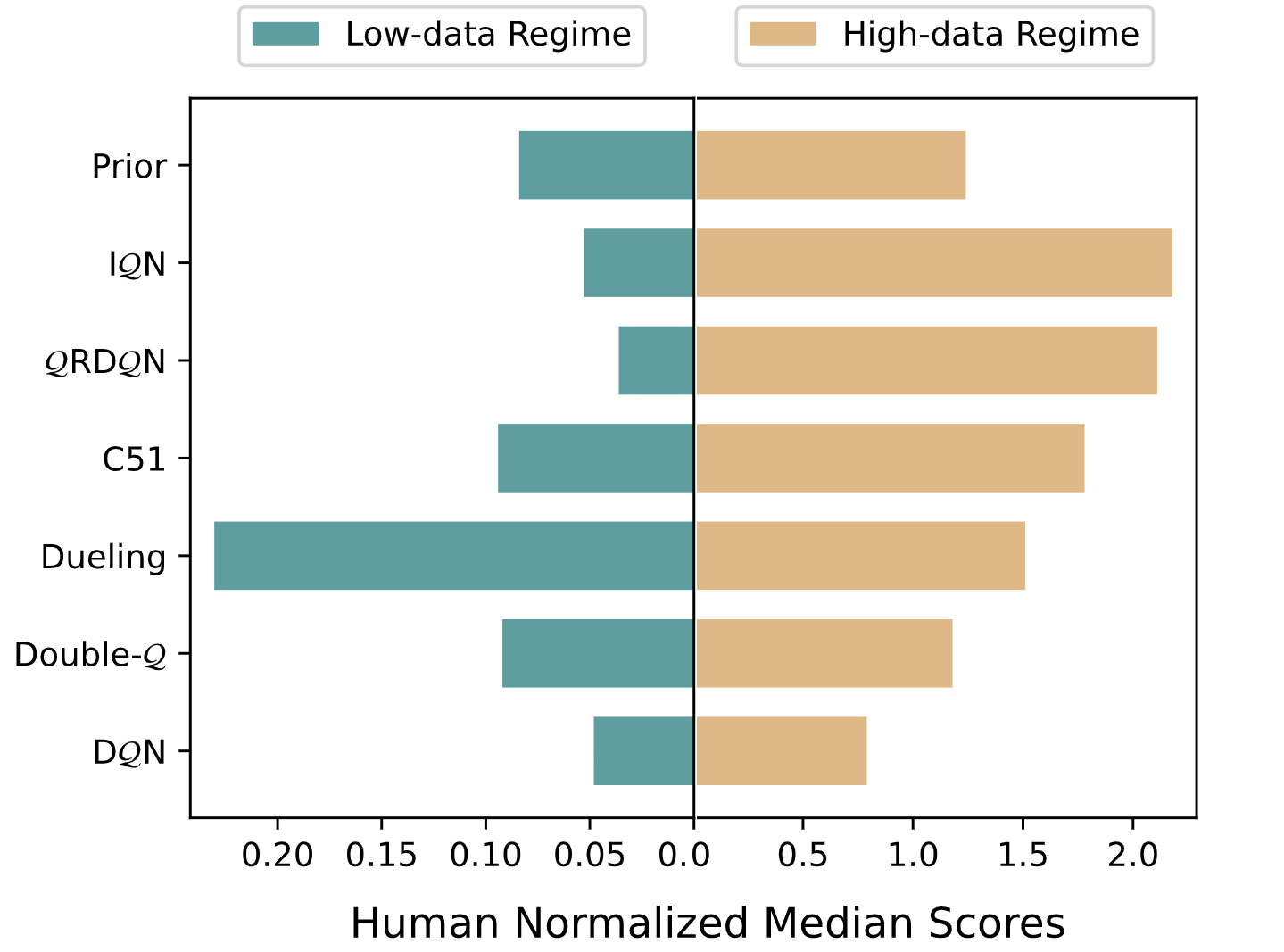}}{Asymptotic vs Low-Data} 
\hskip-0.12in
\stackunder[4pt]{\includegraphics[scale=0.174]{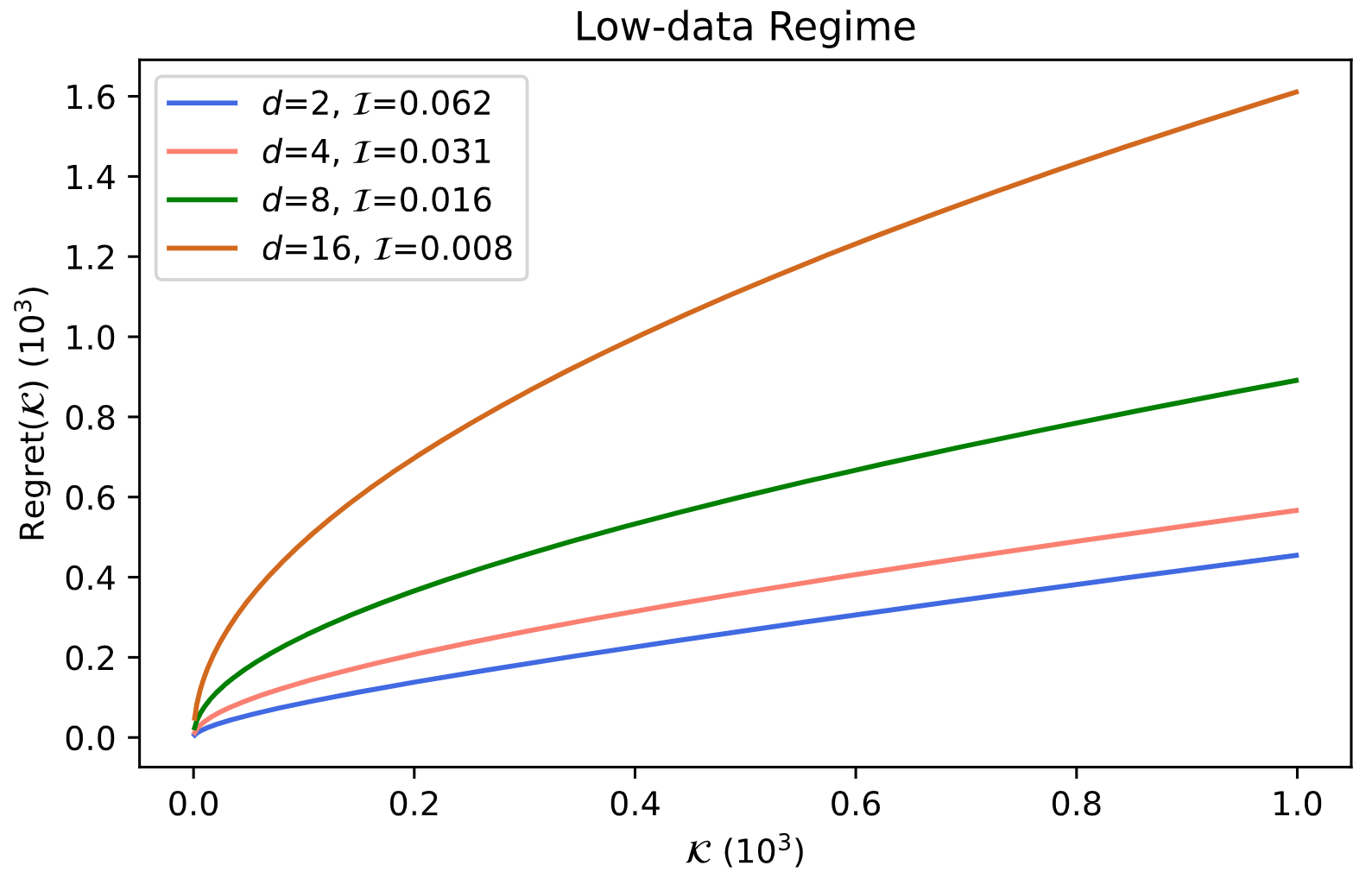}}{Regret in the Low-data Regime}
\hskip-0.05in
\stackunder[4pt]{\includegraphics[scale=0.18]{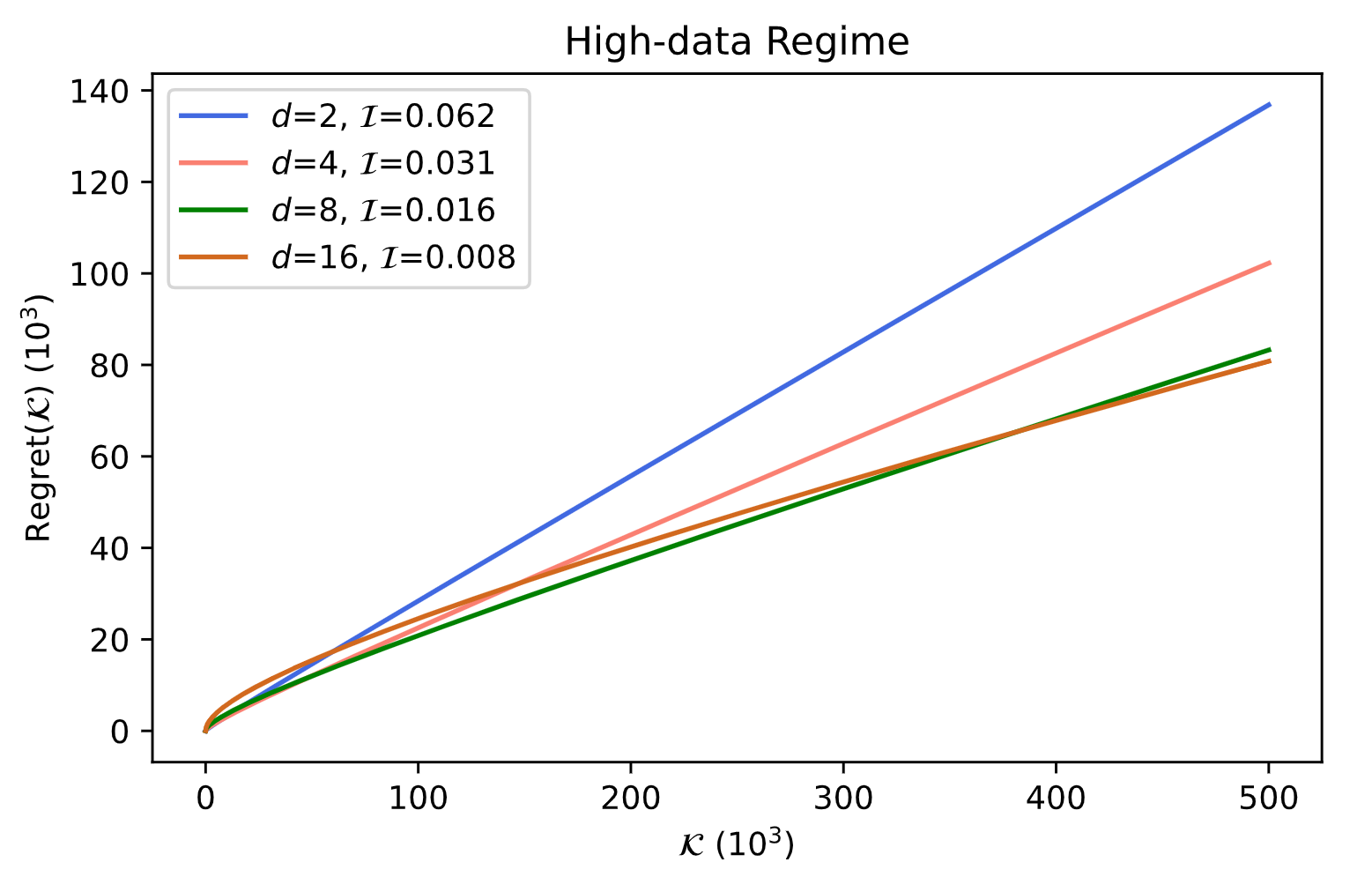}}{Regret in the High-data Regime}
\hskip 0.1pt
\end{center}
\vskip -0.14in
\caption{Left: Scaling laws of reinforcement learning: Baseline comparison of algorithms that were proposed and developed in the high-data regime in the Arcade Learning Environment in both high-data regime and low-data regime. Middle: Regret in the low-data regime. Right: Regret in the high-data regime. }
\label{data}
\end{figure*}
Therefore by Theorem \ref{thm:optregretalg} that for $\mathcal{K} > \Khigh$ there exists an algorithm achieving regret
\begin{align*}
    \textsc{Regret}_{\text{high}}(\mathcal{K})
    &= \tilde{O}\left(\mathcal{H} \dhigh\sqrt{\mathcal{K}} + \mathcal{H} \sqrt{\dhigh}\Ihigh\mathcal{K}\right)  \\
    &= \tilde{O}\left(\mathcal{H} \sqrt{\dhigh}\Ihigh\mathcal{K}\right).
\end{align*}
for every MDP $M\in\mathcal{C}(\Ihigh,\dhigh)$. However, by the lower bound in Theorem \ref{thm:optregretalg}, for $\mathcal{K} > \Khigh$ every algorithm receives regret at least
\begin{align*}
    \textsc{Regret}(\mathcal{K}) &= \widetilde{\Omega}\left(\mathcal{H} \dlow\sqrt{\mathcal{K}} + \mathcal{H} \sqrt{\dlow}\Ilow\mathcal{K}\right) \\
    &> \widetilde{\Omega}\left(\mathcal{H}\sqrt{\dlow}\Ilow\mathcal{K}\right) 
    = \widetilde{\Omega}\left(\mathcal{H}\dhigh^{-\epsilon}\mathcal{K}\right)  \\
    &= \widetilde{\Omega}\left(\dhigh^{\epsilon}\mathcal{H}\dhigh^{-2\epsilon}\mathcal{K}\right)  
    = \widetilde{\Omega}\left(\dhigh^{\epsilon}\mathcal{H}\sqrt{\dhigh}\Ihigh\mathcal{K}\right) \\
    &> \widetilde{\Omega}\left( \dhigh^{\epsilon}\textsc{Regret}_{\text{high}}(\mathcal{K})\right)
\end{align*}
\vskip-0.2in
\end{proof}
Theorem \ref{prop:lowdatavsasymp} introduces the provable trade-off between performance in the low-data regime, i.e. $\mathcal{K}<\Klow$, and the high-data regime, i.e. $\mathcal{K} > \Khigh$.
In particular, in the low-data regime lower capacity function approximation, i.e. lower feature dimension $\dlow$, with larger approximation error, i.e. larger inherent Bellman error $\Ilow$, can provably outperform larger capacity models, i.e. feature dimension $\dhigh$, with smaller approximation error, i.e. inherent Bellman error $\Ihigh$.
Furthermore, the relative performance is reversed in the high-data regime $\mathcal{K} > \Khigh$.
Thus, asymptotic performance of an algorithm is neither indicative nor carries any relevant information on the expected performance of the algorithm when training data is scarce (i.e. limited).

\section{The Assumption of Monotonicity and Performance Rankings}
\label{samplecomplexity}

The instances of the implicit assumption that the performance profile of an algorithm in the high-data regime will translate to the low-data regime monotonically appear in almost all of the studies conducted in the low-data regime. 
In particular, we see that when this line of work was being conducted the best performing algorithm in the high-data regime was an inherently high capacity model, i.e. based on learning the state action value distribution. 
Hence, there are many cases in the literature (e.g. DRQ, OTR, DER, CURL, SimPLE, Efficient-Zero) where all the newly proposed algorithms in the low-data regime are being compared to an algorithm 
that inherently produces a higher capacity model
\textbf{under the implicit assumption} that an algorithm that is state-of-the-art in the high-data regime must be the state-of-the-art in the low-data regime. 
The large scale experiments provided in Section \ref{experiments} demonstrate the impact of this implicit assumption and provide a guideline for a principled analysis and evaluation.
In particular, the results reported in Section \ref{experiments} prove that the performance profile of an algorithm in the high-data regime does not monotonically transfer to the low-data regime.
Due to this extensive focus throughout the literature on low-data regime comparisons to algorithms
that inherently learn higher capacity models,
we provide additional theoretical analysis for the empirically observed sample complexity results in the low to high-data regime in deep reinforcement learning.
The following proposition demonstrates a precise justification of these issues: whenever there are two different actions where the true mean state-action values are within $\epsilon$, an approximation error of $\epsilon$ in total variation distance $d_{TV}$ for 
$\mathcal{D}(s,a)$
of one of the actions can be sufficient to reverse the order of the means.

\begin{proposition}[\emph{Sufficiency of error of $\epsilon$ in total variation distance}]
\label{prop:tvisgood}
Fix a state $s$ and consider two actions $a,\hat{a}$. 
Let $\mathcal{D}(s,a)$ be the true state-action value distribution of $(s,a)$, and let $\mathcal{Z}(s,a) \sim \mathcal{D}(s,a)$. 
Suppose that $\mathbb{E}[\mathcal{Z}(s,a)] = \mathbb{E}[\mathcal{Z}(s,\hat{a})] + \epsilon$.
Then there is a random variable $\mathcal{Y}$ such that $d_{TV}(\mathcal{Y},\mathcal{Z}(s,a)) \leq \epsilon \: \textrm{ and } \: \mathbb{E}[\mathcal{Z}(s,\hat{a})] \geq \mathbb{E}[\mathcal{Y}].$
\end{proposition}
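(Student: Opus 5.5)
The plan is a direct construction: perturb $\mathcal{Z}(s,a)$ by moving only an $\epsilon$ fraction of its probability mass, and place that mass far enough below the mean that the expectation drops by at least $\epsilon$. Let $\mu = \mathbb{E}[\mathcal{Z}(s,a)]$, so that $\mathbb{E}[\mathcal{Z}(s,\hat{a})] = \mu - \epsilon$. The statement puts no boundedness restriction on $\mathcal{Y}$, so the displaced mass may sit wherever we like on the real line; this is what makes a single mixture suffice.

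\textbf{Case $0 < \epsilon < 1$.} Let $B \sim \mathrm{Bernoulli}(\epsilon)$ be independent of $\mathcal{Z}(s,a)$, and let $w$ be a real number chosen below. Define $\mathcal{Y} = \mathcal{Z}(s,a)$ when $B = 0$ and $\mathcal{Y} = w$ when $B = 1$. The law of $\mathcal{Y}$ is then the mixture $(1-\epsilon)\mathcal{D}(s,a) + \epsilon\,\delta_w$.

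The first step is the total variation bound. For every measurable set $E$,
\begin{equation*}
\bigl|\Pr[\mathcal{Y}\in E] - \Pr[\mathcal{Z}(s,a)\in E]\bigr| = \epsilon\,\bigl|\delta_w(E) - \mathcal{D}(s,a)(E)\bigr| \leq \epsilon .
\end{equation*}
Hence $d_{TV}(\mathcal{Y},\mathcal{Z}(s,a)) \leq \epsilon$.

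The second step is the mean computation. By linearity of expectation,
\begin{equation*}
\mathbb{E}[\mathcal{Y}] = (1-\epsilon)\mu + \epsilon w = \mu - \epsilon(\mu - w).
\end{equation*}
Choosing $w = \mu - 1$ gives $\mathbb{E}[\mathcal{Y}] = \mu - \epsilon = \mathbb{E}[\mathcal{Z}(s,\hat{a})]$. This satisfies the required inequality, with equality, and any $w \leq \mu - 1$ makes it strict.

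\textbf{Case $\epsilon \geq 1$.} Total variation distance never exceeds $1$, so any random variable is admissible. Take $\mathcal{Y}$ to be the constant $\mathbb{E}[\mathcal{Z}(s,\hat{a})]$; then $d_{TV}(\mathcal{Y},\mathcal{Z}(s,a)) \leq 1 \leq \epsilon$ and $\mathbb{E}[\mathcal{Y}] = \mathbb{E}[\mathcal{Z}(s,\hat{a})]$.

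\textbf{Main obstacle.} The argument uses only the definitions, so the one genuine issue is the implicit assumption that $\mathcal{Y}$ may take values outside the support of $\mathcal{D}(s,a)$. If values were confined to an interval $[v_{\min}, v_{\max}]$, as with C51 atoms, then moving $\epsilon$ of mass can lower the mean by at most $\epsilon(\mu - v_{\min})$. In that setting the choice $w = v_{\min}$ works exactly when $\mu - v_{\min} \geq 1$; otherwise the total variation budget would have to scale with $1/(\mu - v_{\min})$. Under the statement as written, which imposes no such constraint, the construction above completes the proof. We also implicitly assume that $\mu$ is finite, which the hypothesis on the means already requires.
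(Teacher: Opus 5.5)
Your construction is correct. The mixture $(1-\epsilon)\mathcal{D}(s,a)+\epsilon\,\delta_{\mu-1}$ lies within total variation $\epsilon$ of $\mathcal{Z}(s,a)$ and has mean exactly $\mu-\epsilon=\mathbb{E}[\mathcal{Z}(s,\hat{a})]$. Moving an $\epsilon$ fraction of mass in this way (equivalently, a coupling that disagrees with probability $\epsilon$) is almost certainly the argument behind the paper's statement, though the paper's proof is deferred to the supplementary material and I could not check it.

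One correction to your closing aside, which does not affect the proof: under a support constraint $[v_{\min},v_{\max}]$, moving $\epsilon$ of mass can lower the mean by more than $\epsilon(\mu-v_{\min})$. Taking the top $\epsilon$-tail rather than a random $\epsilon$ fraction lowers it by $\epsilon$ times (mean of that tail minus $v_{\min}$), so your bound describes only your particular construction, not the best achievable.
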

The proof is provided in the supplementary material. 
Proposition \ref{prop:tvisgood} shows that to have the correct ranking of the actions the state-action value distribution must be learnt with error at most $\epsilon$. Standard results on sample complexity for discrete distributions then imply that algorithms that learn the state-action value distribution with fixed support size $k$, i.e C51, require $k/\epsilon^2$ samples to achieve total variation distance at most $\epsilon$.
More advanced algorithms such as $\mathcal{Q}$RD$\mathcal{Q}$N and I$\mathcal{Q}$N do away with the assumption that the support is known. 
This allows a more flexible representation in order to more accurately represent
state-action values, but, as we will show, leads to a further increase in the sample complexity.
The $\mathcal{Q}$RD$\mathcal{Q}$N algorithm models it 
as a uniform mixture of $\mathcal{N}$ Dirac deltas on the reals i.e.
$\mathcal{Z}(s,a) = \frac{1}{\mathcal{N}}\sum_{i=1}^\mathcal{N} \delta_{\theta_i(s,a)}$,
where $\theta_i(s,a) \in \mathbb{R}$ is a parametric model.

\begin{proposition}[\emph{Sample Complexity with Unknown Support}]
\label{prop:qrdqncomplexity}
Let $\mathcal{N} > \mathcal{M} \geq 2$, $\epsilon > \frac{\mathcal{M}}{4\mathcal{N}}$, and $\theta_i\in \mathbb{R}$ for $i \in [\mathcal{N}]$. The number of samples required to learn a model of the form
$\mathcal{Z} = \frac{1}{\mathcal{N}}\sum_{i=1}^\mathcal{N} \delta_{\theta_i}$ to within total variation distance $\epsilon$ is $\Omega\left(\frac{\mathcal{M}}{\epsilon^2}\right)$.
\end{proposition}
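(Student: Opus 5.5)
We prove the bound by embedding a standard lower bound for learning discrete distributions with $\mathcal{M}$ atoms into the class of uniform Dirac mixtures $\mathcal{Z} = \frac{1}{\mathcal{N}}\sum_{i=1}^\mathcal{N}\delta_{\theta_i}$. The idea is that an unknown-support mixture with $\mathcal{N}$ components can place its mass on any $\mathcal{M}$ chosen points, with weights that are multiples of $1/\mathcal{N}$. Any learner for the larger class must therefore, in particular, learn this $\mathcal{M}$-point subfamily. The condition $\epsilon > \frac{\mathcal{M}}{4\mathcal{N}}$ says the granularity $1/\mathcal{N}$ is fine enough compared with $\epsilon/\mathcal{M}$ to realize perturbations of size $\epsilon$ spread over $\mathcal{M}$ atoms.

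\textbf{Step 1 (hard subfamily).} Fix $\mathcal{M}$ distinct support points $x_1,\dots,x_\mathcal{M}$ and group them into $\mathcal{M}/2$ pairs (assume $\mathcal{M}$ even; otherwise drop one point). Start from the near-uniform base distribution that puts about $\mathcal{N}/\mathcal{M}$ of the $\theta_i$ on each $x_j$. For each sign vector $\sigma\in\{\pm1\}^{\mathcal{M}/2}$, form $P_\sigma$ by moving $m$ of the $\theta_i$ from one point of pair $j$ to the other, in the direction $\sigma_j$. Take $m\approx c\,\epsilon\mathcal{N}/\mathcal{M}$ for a small constant $c$. Each pair then carries a perturbation of probability mass $\eta = m/\mathcal{N}\approx c\epsilon/\mathcal{M}$. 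The hypothesis $\epsilon > \mathcal{M}/(4\mathcal{N})$ is exactly what guarantees that $m\ge 1$ can be chosen as an integer, up to constant adjustments, so every $P_\sigma$ lies in the model class. Two members differing in a set $T$ of pairs have total variation distance of order $|T|\,\eta$.

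\textbf{Step 2 (Assouad).} We apply Assouad's lemma to the hypercube $\{P_\sigma\}$. Consider two neighbours differing in one coordinate. Their per-sample KL divergence (or squared Hellinger distance) is $O(\eta^2/(1/\mathcal{M})) = O(\epsilon^2/\mathcal{M})$, since the base mass on each point is about $1/\mathcal{M}$. Over $n$ samples it is therefore $O(n\epsilon^2/\mathcal{M})$. When $n \le c'\mathcal{M}/\epsilon^2$, neighbours are statistically indistinguishable with constant probability. Assouad's lemma then forces any estimator to misidentify a constant fraction of the $\mathcal{M}/2$ coordinates, which yields expected TV error $\Omega(\mathcal{M}\eta) = \Omega(\epsilon)$. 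To get error larger than $\epsilon$ itself, we choose $c$ large enough. Hence $\Omega(\mathcal{M}/\epsilon^2)$ samples are necessary. This holds for proper learners, and also for improper ones, because TV error against a $P_\sigma$ can be converted into coordinate errors via the triangle inequality.

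\textbf{Main obstacle.} The delicate step is the integrality and constant bookkeeping in Step 1. The base distribution must have at least $m$ atoms on every point, which needs $\mathcal{N}/\mathcal{M}\ge m$, i.e.\ $\epsilon\lesssim 1$. The perturbation must be at least one atom, which needs $\epsilon\gtrsim\mathcal{M}/\mathcal{N}$. The constants must be tuned so that the Assouad separation exceeds $\epsilon$. If this grows messy, the fallback is to take a Fano-type packing from the Gilbert--Varshamov bound over $\{\pm1\}^{\mathcal{M}/2}$, which gives the same $\Omega(\mathcal{M}/\epsilon^2)$ rate with looser constants.
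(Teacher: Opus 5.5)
Your argument is correct in the regime where $\epsilon$ is below an absolute constant. The statement implicitly requires this, since for $\epsilon\ge 1$ zero samples suffice.

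The paper defers its proof to the supplement. The surrounding text suggests it transfers the standard $\Omega(k/\epsilon^2)$ bound for $k$-point distributions (the one quoted for C51) with $k=\mathcal{M}$. The hypothesis $\epsilon>\mathcal{M}/(4\mathcal{N})$ would then serve to make $\mathcal{M}$-point distributions representable at mass granularity $1/\mathcal{N}$. Note that $\mathcal{M}/(4\mathcal{N})$ is exactly the worst-case total variation cost of rounding $\mathcal{M}$ masses to multiples of $1/\mathcal{N}$.

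You instead open the black box and build the Assouad hypercube directly inside the model class. This is the more careful route. Suppose the transfer is done by approximation, i.e. every $\mathcal{M}$-point $p$ lies within $\mathcal{M}/(4\mathcal{N})<\epsilon$ of the class. That alone does not yield a lower bound: a learner for the class is then fed samples from $p$ rather than from its rounding, and at $n\sim\mathcal{M}/\epsilon^2$ the $n$-fold products of the two need not be close. Embedding the hard instances exactly in the class avoids this, at the price of the bookkeeping you describe.

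One correction to that bookkeeping. In Step 1 the constant $c$ must be large, not small: Assouad's guarantee is about $c\epsilon/4$, and this has to exceed $\epsilon$. Also, the real role of the hypothesis is not $m\ge 1$, which a ceiling gives for free. It is the upper bound $\eta=\lceil c\epsilon\mathcal{N}/\mathcal{M}\rceil/\mathcal{N}<(c+4)\epsilon/\mathcal{M}$, which keeps the per-sample divergence between neighbouring $P_\sigma$ at $O(\epsilon^2/\mathcal{M})$ after rounding up.
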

The proof is provided in the supplementary material. 
Note that the lower bound in Proposition \ref{prop:qrdqncomplexity} can be significantly larger than $k/\epsilon^2$ samples.

\begin{figure*}[t]
\footnotesize
\begin{center}
\vskip -0.18in
\stackunder[3pt]{\includegraphics[scale=0.2]{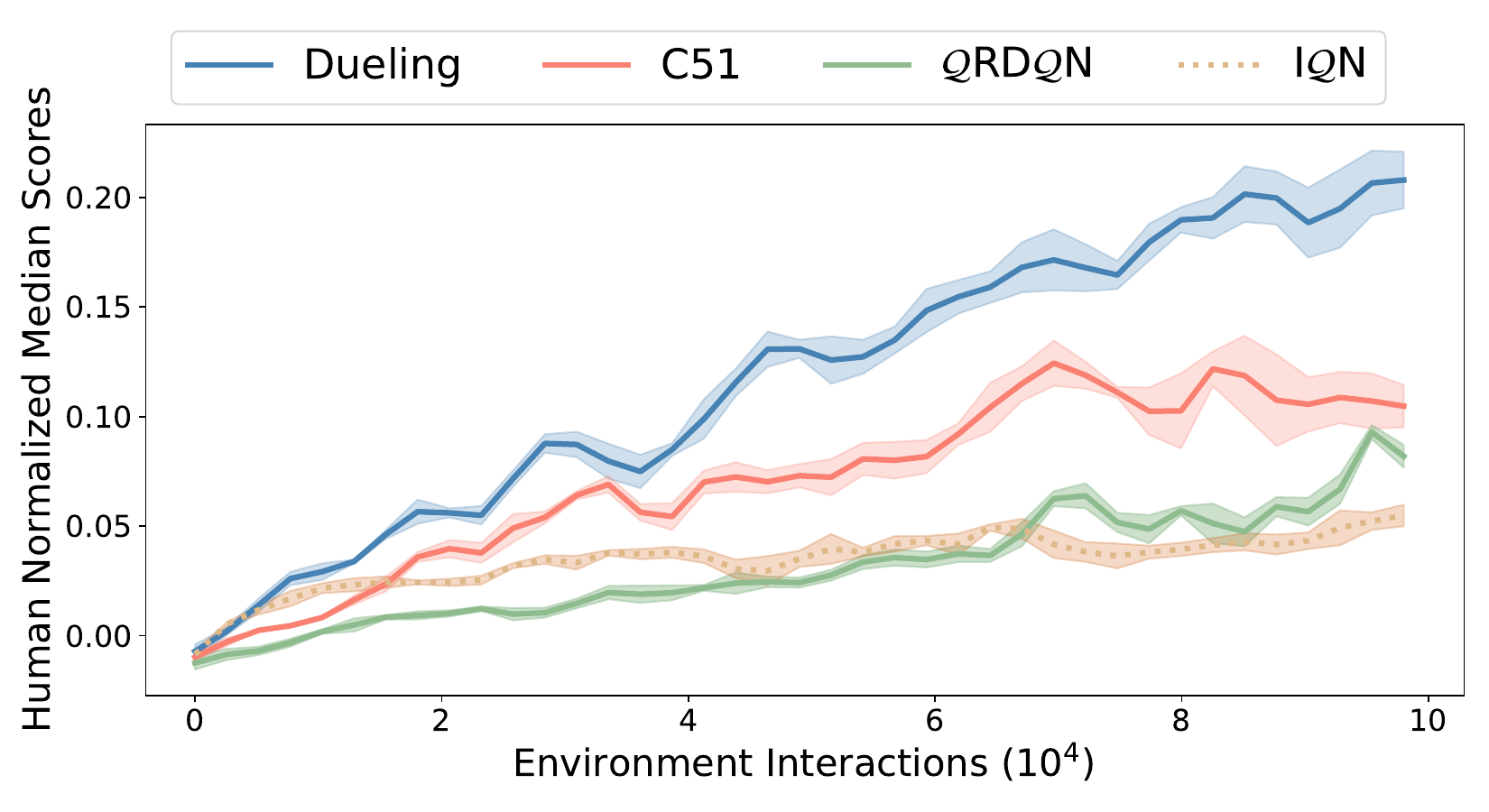}}{\scriptsize{Median}}
\stackunder[3pt]{\includegraphics[scale=0.2]{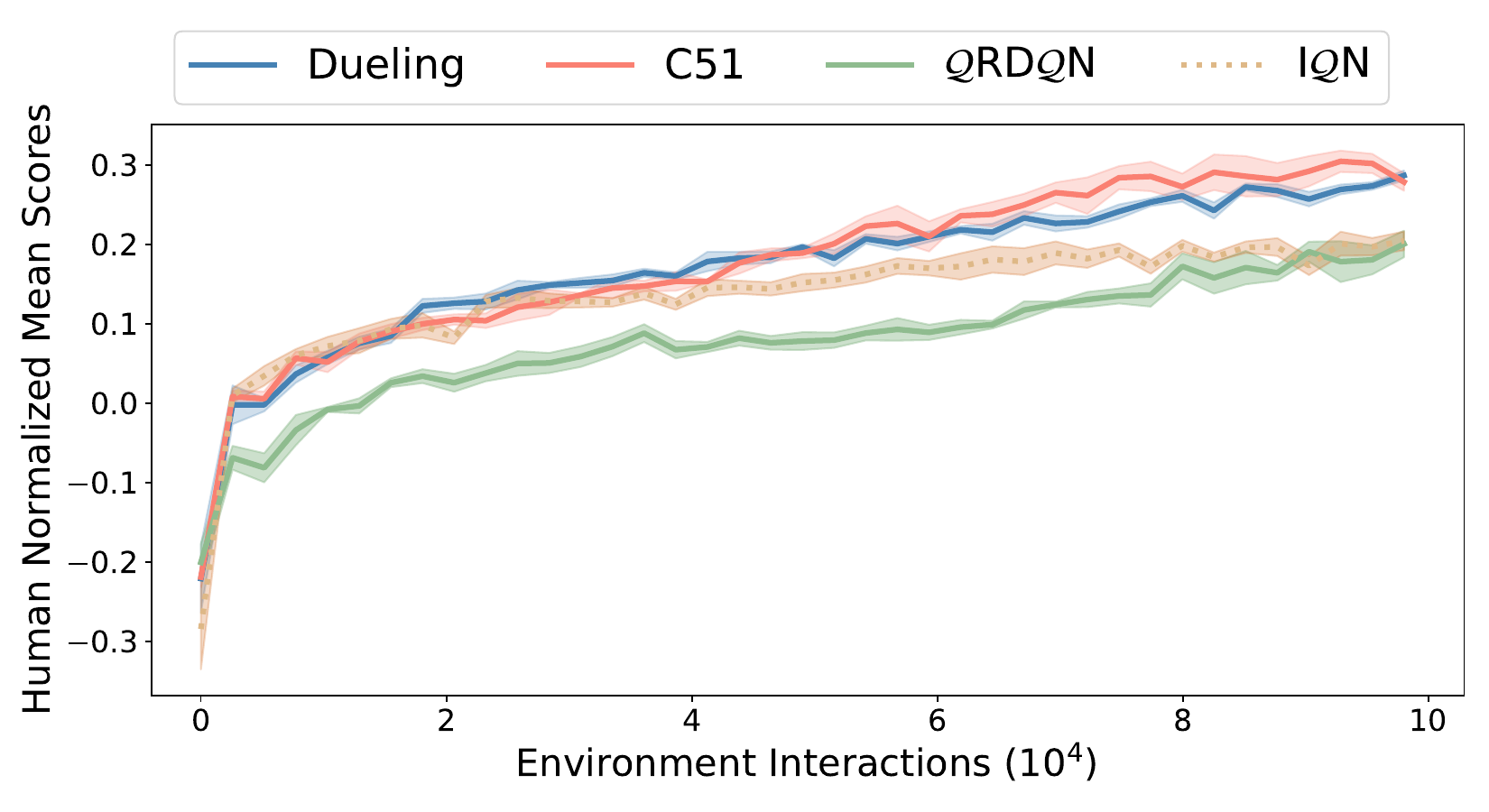}}{\scriptsize{Mean}}
\stackunder[3pt]{\includegraphics[scale=0.2]{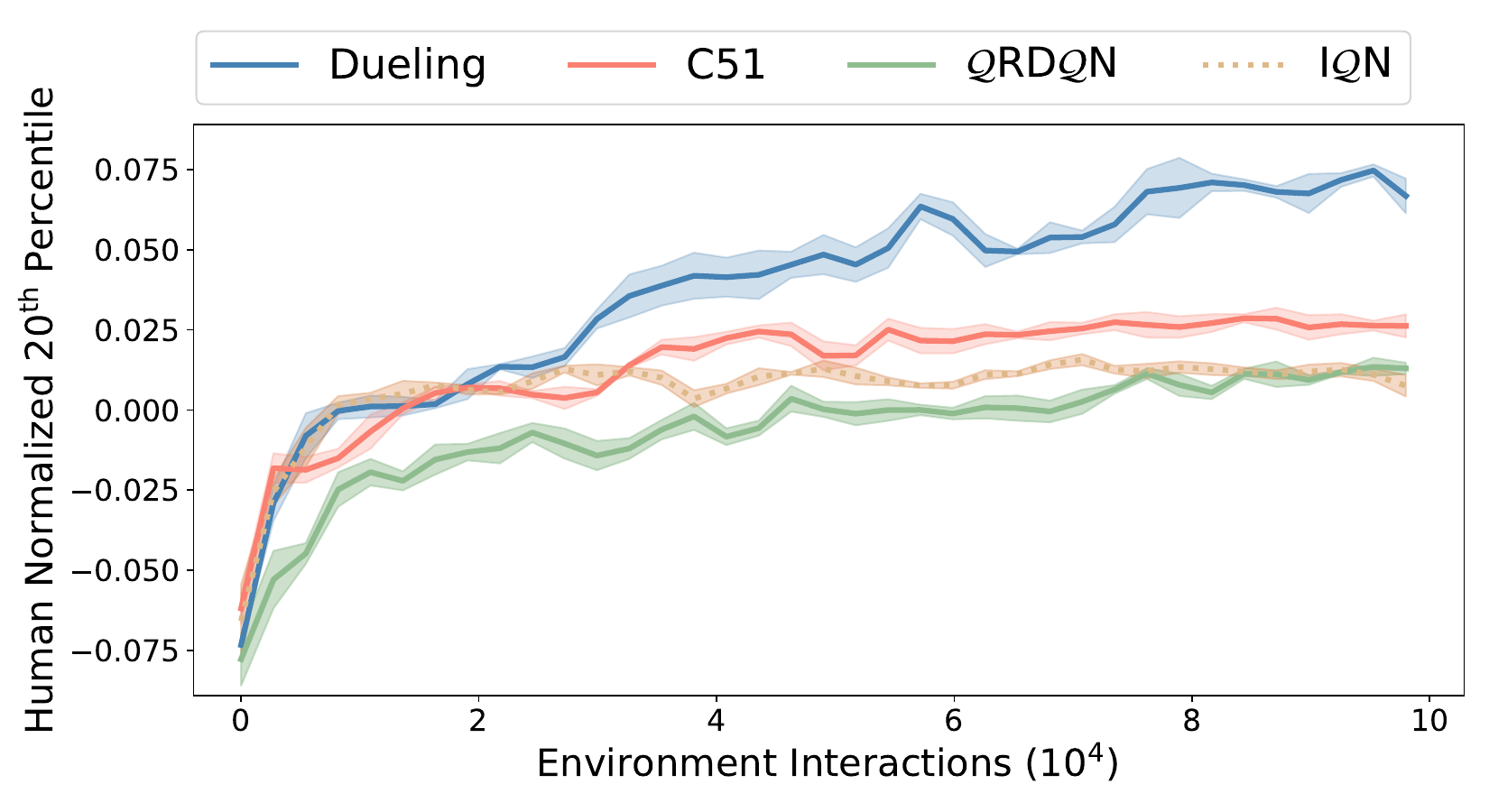}}{\scriptsize{20$^{\textrm{th}}$ Percentile}}\\
\stackunder[3pt]{\includegraphics[scale=0.2]{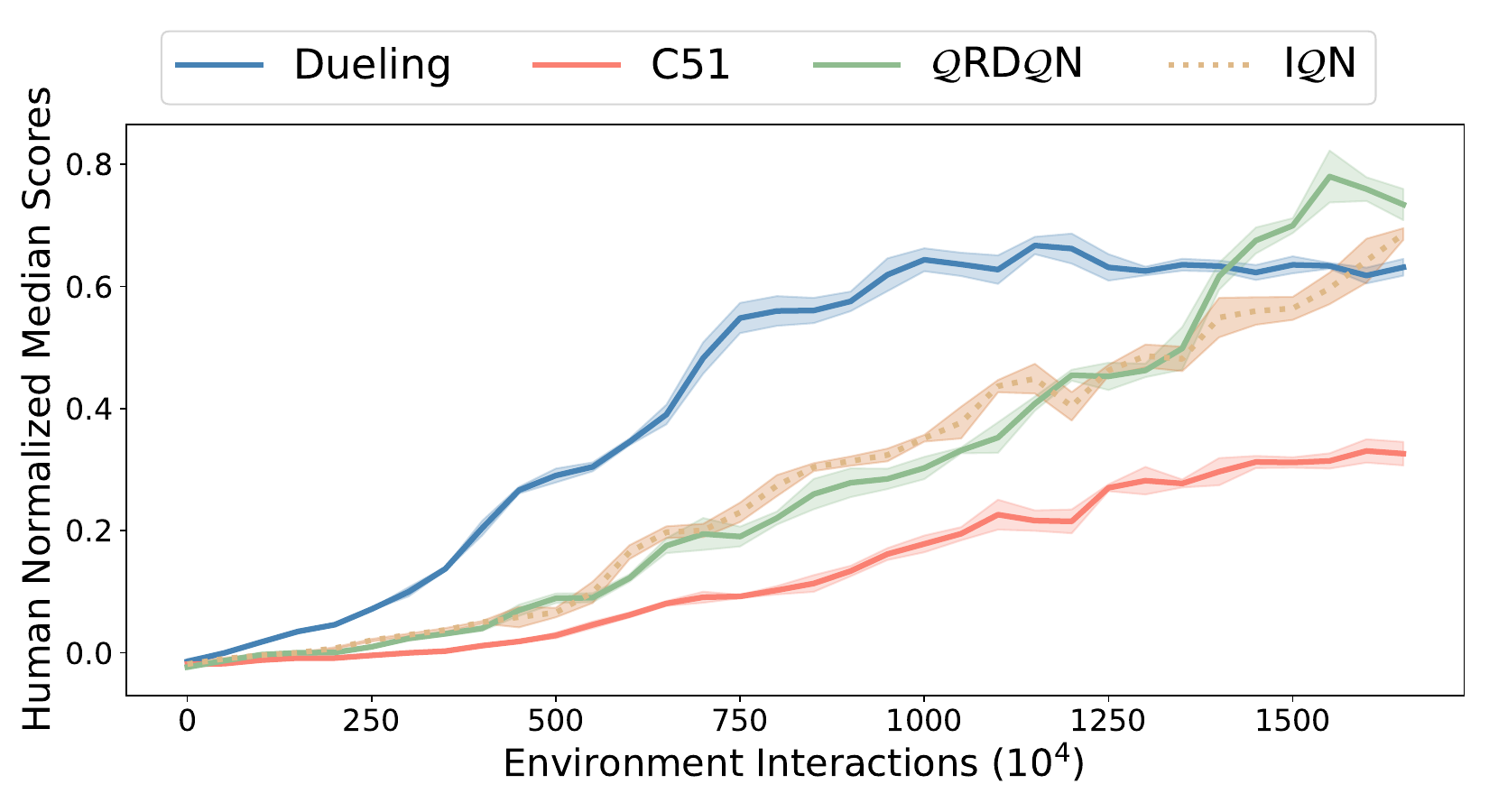}}{\scriptsize{Median}}
\stackunder[3pt]{\includegraphics[scale=0.2]{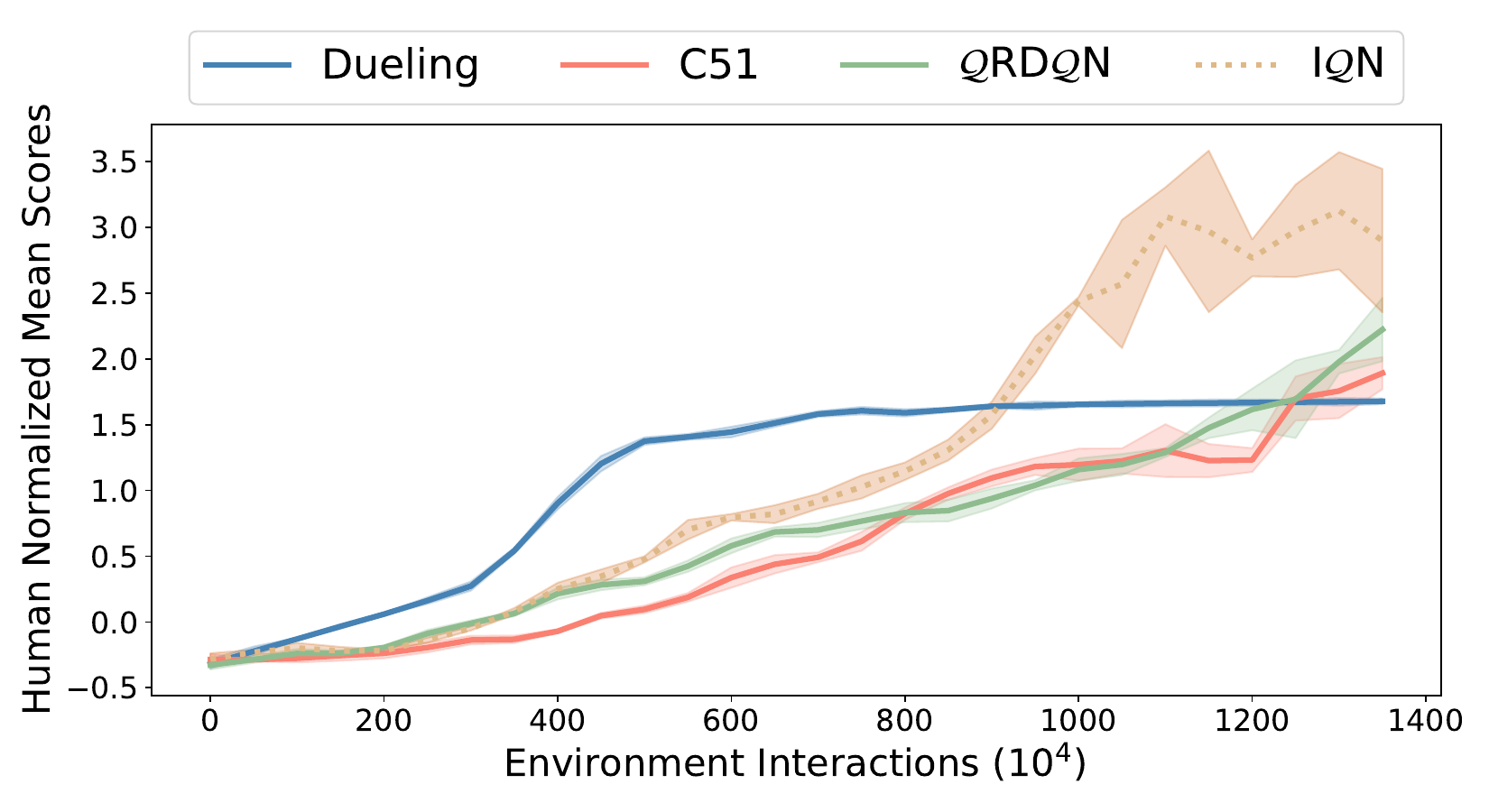}}{\scriptsize{Mean}}
\stackunder[3pt]{\includegraphics[scale=0.2]{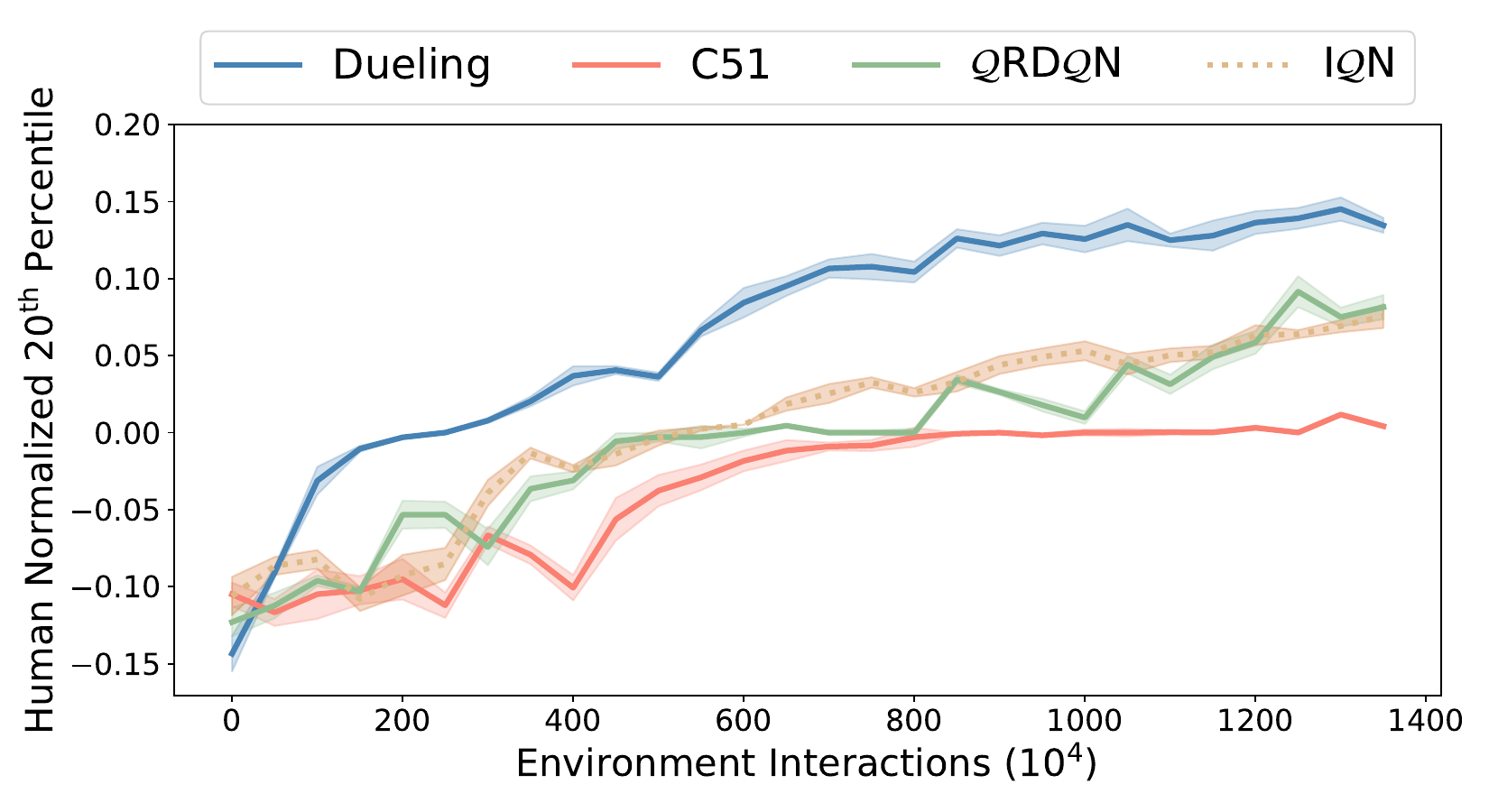}}{\scriptsize{20$^{\textrm{th}}$ Percentile}}
\end{center}
\vskip -0.2in
\caption{Up: Human normalized median, mean and 20$^\textrm{th}$ percentile results for the dueling algorithm, C51, I$\mathcal{Q}$N and $\mathcal{Q}$RD$\mathcal{Q}$N in the Arcade Learning Environment 100K benchmark. Down: Human normalized median, mean, and 20$^\textrm{th}$ percentile results for the dueling algorithm, C51, I$\mathcal{Q}$N and $\mathcal{Q}$RD$\mathcal{Q}$N in the high-data regime towards 200 million frame.}
\label{all}
\vskip -0.14in
\end{figure*}
\begin{table*}[t!]
\vskip -0.04in
\centering
\caption{Large scale comparison of $\mathcal{Q}$-based deep reinforcement learning algorithms with human normalized mean, median and 20$^\textrm{th}$ percentile results in the Arcade Learning Environment 100K benchmark for D$\mathcal{Q}$N \citep{mn15}, deep Double-$\mathcal{Q}$ \citep{hado16}, dueling \citep{wang16}, Prior \citep{tom16}, C51, $\mathcal{Q}$RD$\mathcal{Q}$N and I$\mathcal{Q}$N \citep{dab18}.}
\vskip -0.25in
\begin{tabular}{lccccr}
\toprule
Algorithms                      & Human Normalized Median    & Human Normalized Mean      &  20$^\textrm{th}$ Percentile  \\
\midrule
D$\mathcal{Q}$N                & 0.0481$\pm$0.0036          & 0.1535$\pm$0.0119         &  0.0031$\pm$0.0032              \\
Double-$\mathcal{Q}$           & 0.0920$\pm$0.0181          & \textbf{0.3169$\pm$0.0196}&  0.0341$\pm$0.0042              \\
Dueling                        & \textbf{0.2304$\pm$0.0061} & 0.2923$\pm$0.0060         &  \textbf{0.0764$\pm$0.0037}    \\
C51                            & 0.0941$\pm$0.0081          & 0.3106$\pm$0.0199         &   0.0274$\pm$0.0024                  \\
$\mathcal{Q}$RD$\mathcal{Q}$N  & 0.0820$\pm$0.0037          & 0.2171$\pm$0.0098         &   0.0189$\pm$0.0031                  \\
I$\mathcal{Q}$N                & 0.0528$\pm$0.0058          & 0.2050$\pm$0.0123         &   0.0091$\pm$0.0011                  \\
Prior                          & 0.0840$\pm$0.0018          & 0.2792$\pm$0.0123         &   0.0267$\pm$0.0042                  \\
\bottomrule
\end{tabular}
\label{comparison}
\vskip -0.14in
\end{table*}

\section{Principled Evaluation Framework}

In Section \ref{experiments}, we systematically explain and discuss the underlying design paradigms, the implicit assumptions and the methodological choices made in deep reinforcement learning research that led to incorrect conclusions. In this section we introduce the principled evaluation framework to ensure the research progress we obtain in deep reinforcement learning is reliable and scientifically robust. 
\vspace{-0.2cm}
\begin{GrayBox}
\vskip-0.02in
\noindent \textbf{I. Assumptions matter:} Performance rankings across regimes are non-monotone.\\\hskip-0.1in
\noindent \textbf{II. Biases in Evaluation:} Including algorithms in the comparison benchmark based on the monotonicity assumption will create biased evaluation.\\
\noindent \textbf{III. Core Algorithms:} Core algorithms must be included in the comparison benchmarks.\\
\noindent \textbf{IV. Inherent Capacity:} Inherent capacity and dimensionality will provide insights on performance rankings across regimes.\\
\noindent \textbf{V. Biases in Datasets:} Creating datasets based on the monotonicity assumption will create biased benchmarks.
\vskip-0.1in
\end{GrayBox}
\vskip-0.04in

\section{Large Scale Empirical Analysis}
\label{experiments}
The empirical analysis is conducted in the Arcade Learning Environment (ALE) \citep{mn15}. 
The Double $\mathcal{Q}$-learning algorithm is trained via \citet{hado16} initially proposed by \citet{hasselt10}.
The dueling algorithm is trained via \citet{wang16}. The prior algorithm refers to the prioritized experience replay algorithm proposed by \citet{tom16}.
The experiments are run  
with Haiku as the neural network library, Optax \citep{optax} as the optimization library, and RLax for the reinforcement learning library \citep{jax2}. 
All of the results are reported with the standard error of the mean.
For the full list of algorithms, details on the hyperparameters, direct references and the detailed explanations of the baselines please see the supplementary material.
To provide a complete picture of the sample complexity we conducted our experiments in both low-data, i.e. the Arcade Learning Environment 100K benchmark, and high data regime, i.e. baseline 200 million frame training. 
Note that human normalized score is computed as follows: $\textrm{Score}_{\textrm{HN}} = (\textrm{Score}_{\textit{agent}} - \textrm{Score}_{\textit{random}})/(\textrm{Score}_{\textit{human}} - \textrm{Score}_{\textit{random}})$.

\begin{figure*}[t]
    \footnotesize
    \begin{center}
    \stackunder[1pt]{\includegraphics[scale=0.12]{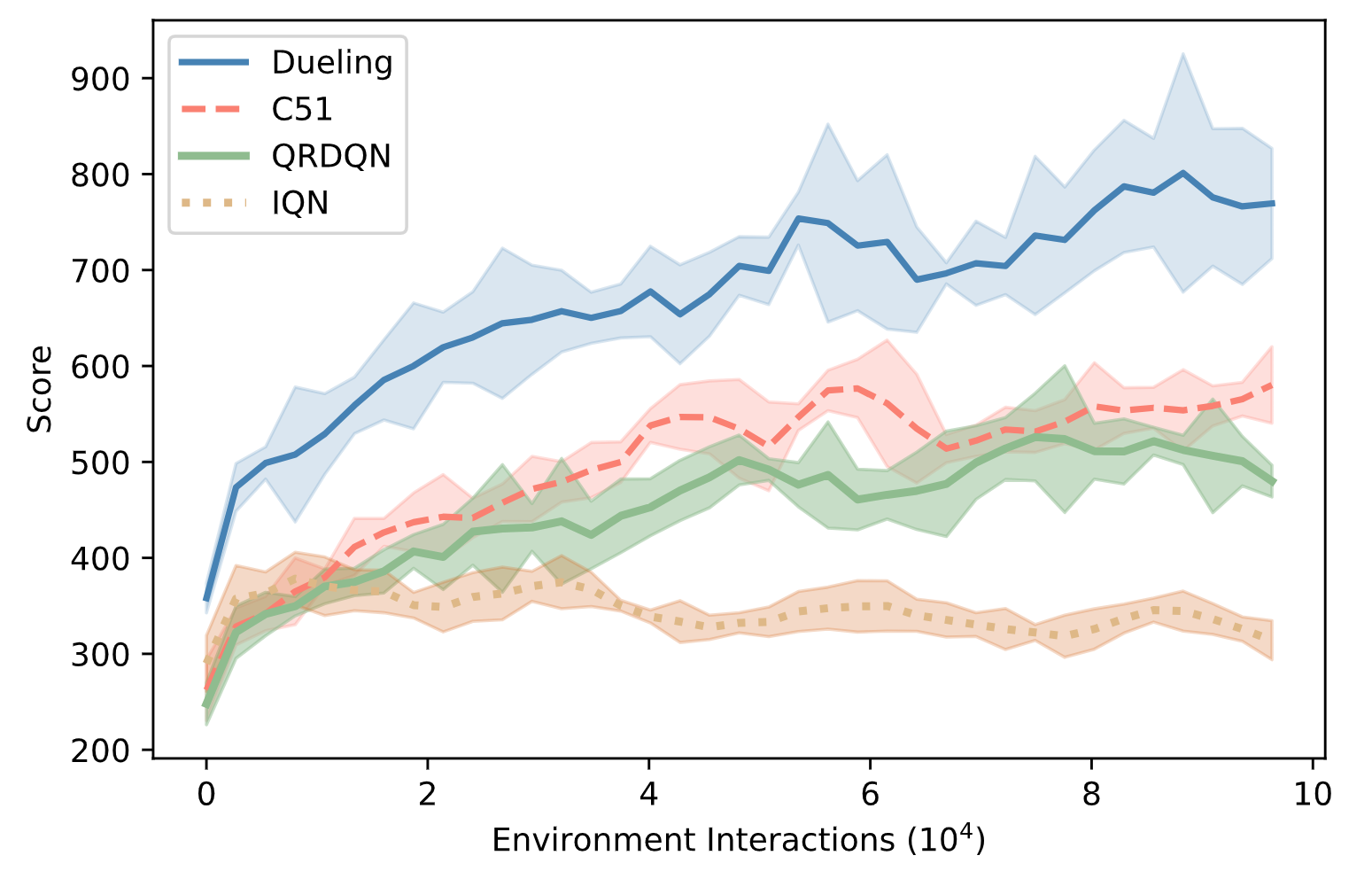}}{\scriptsize{Alien}}
    \stackunder[1pt]{\includegraphics[scale=0.125]{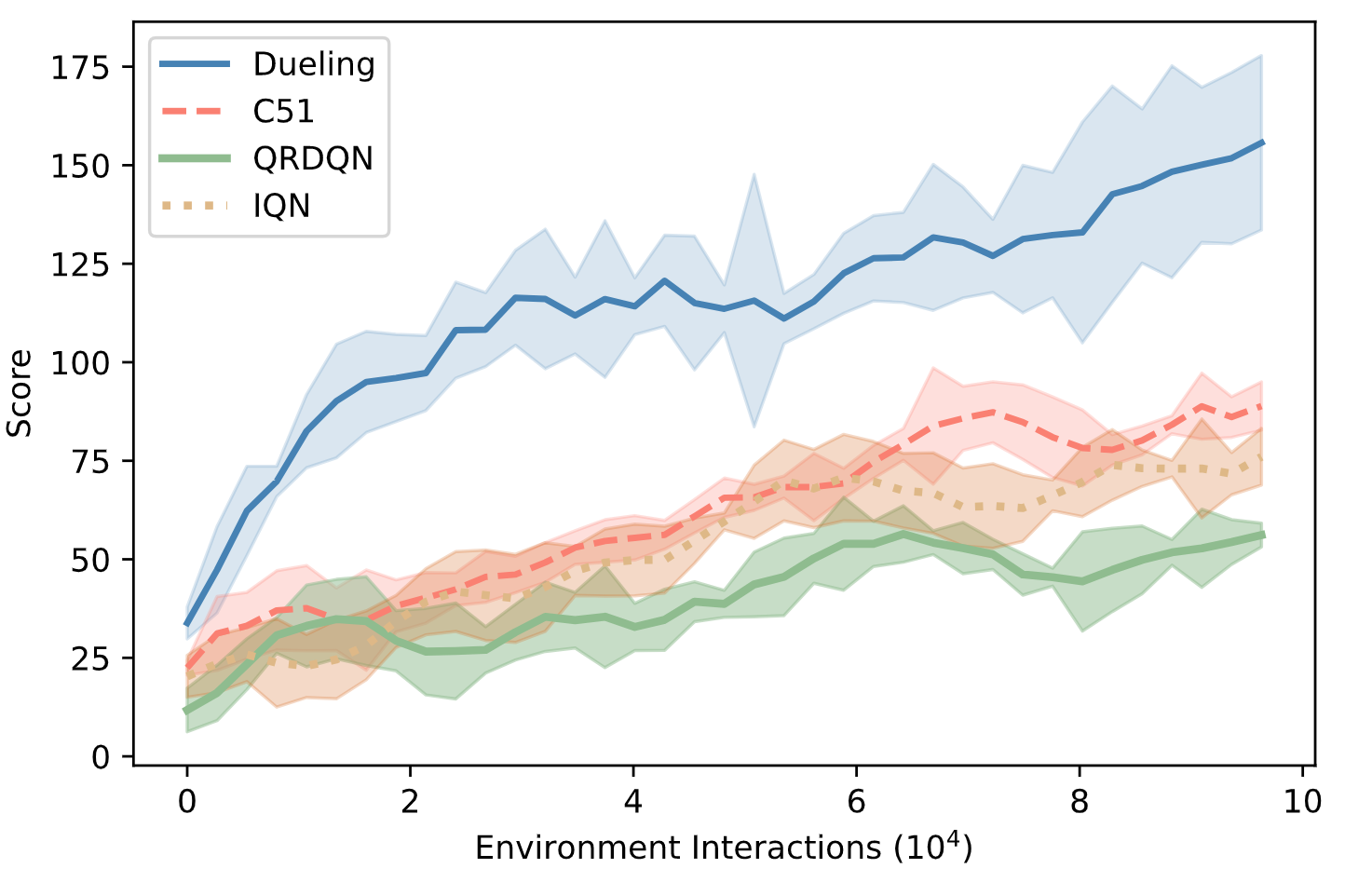}}{\scriptsize{Amidar}}
    \stackunder[1pt]{\includegraphics[scale=0.12]{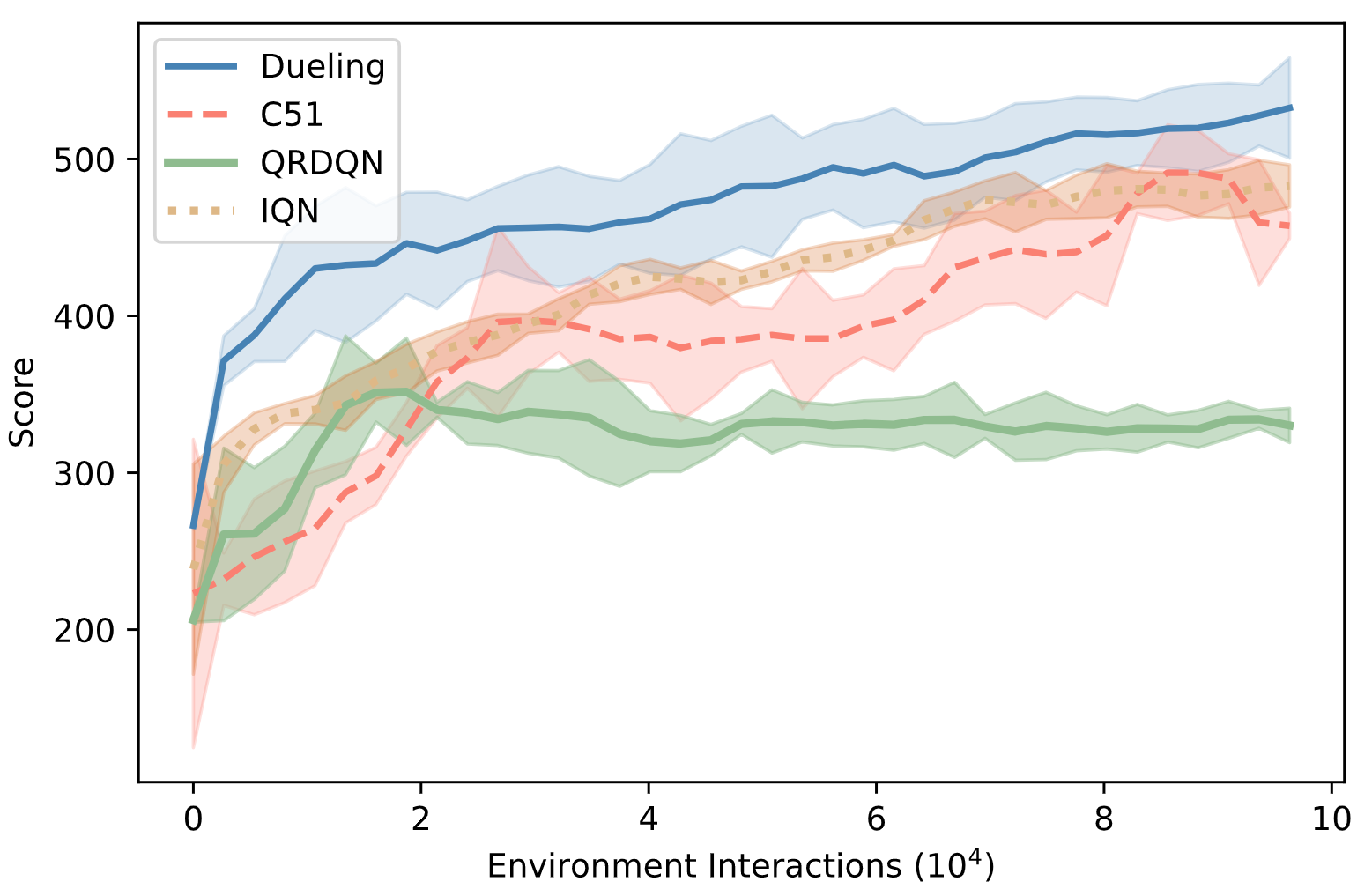}}{\scriptsize{Assault}}
    \stackunder[1pt]{\includegraphics[scale=0.12]{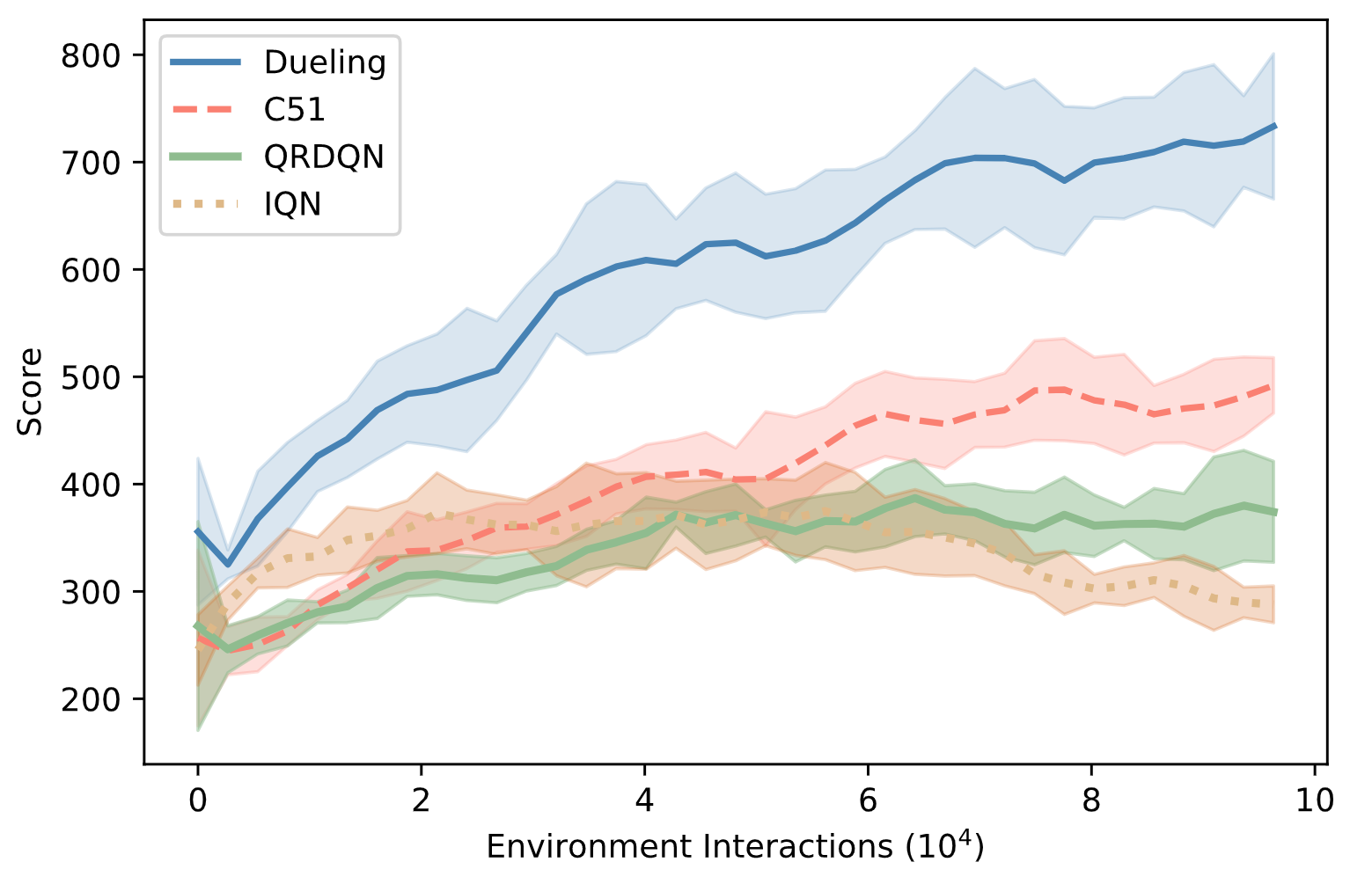}}{\scriptsize{Asterix}}\\
    \vskip-0.02in
    \stackunder[1pt]{\includegraphics[scale=0.12]{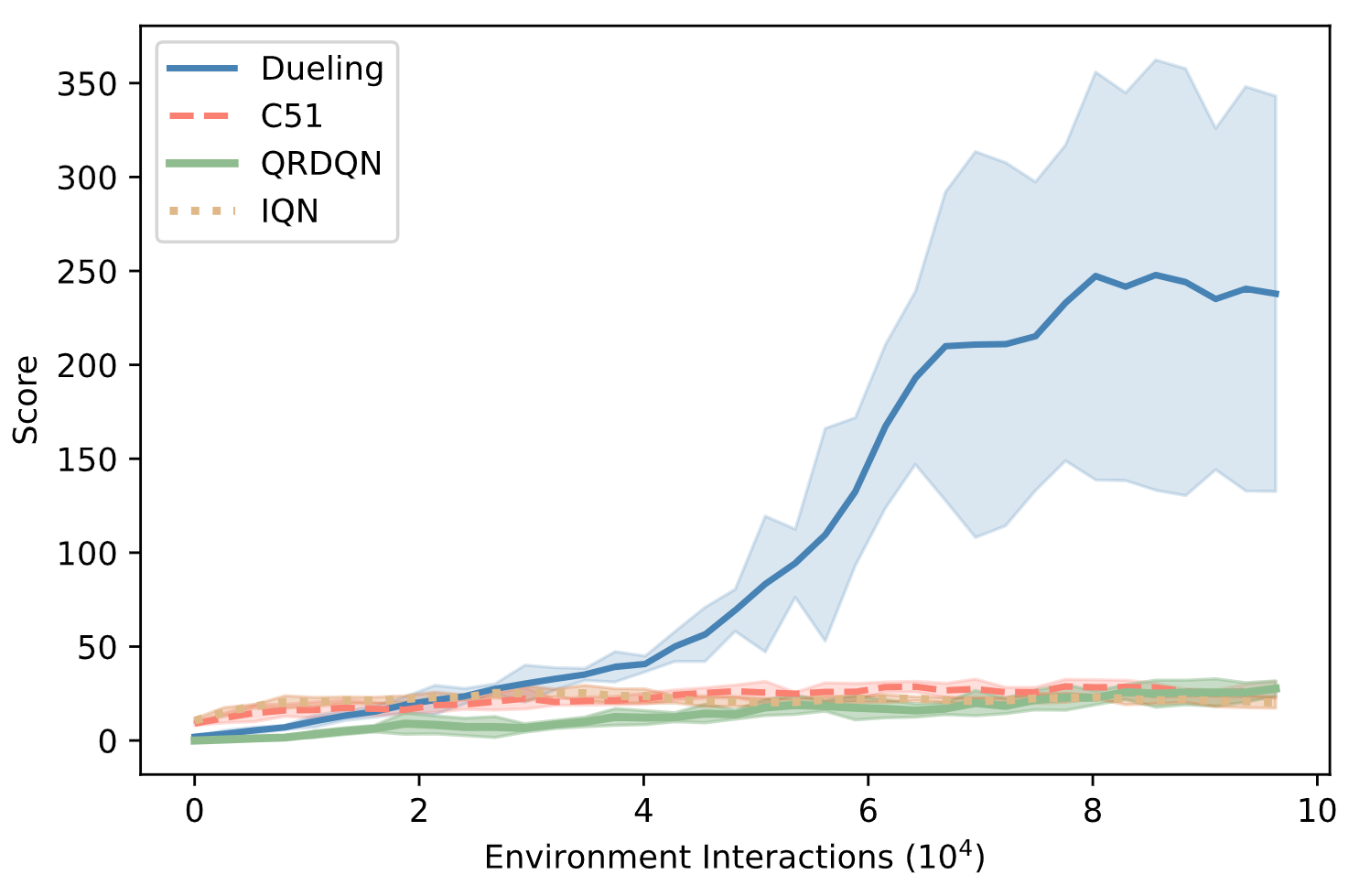}}{\scriptsize{BankHeist}}
    \stackunder[1pt]{\includegraphics[scale=0.12]{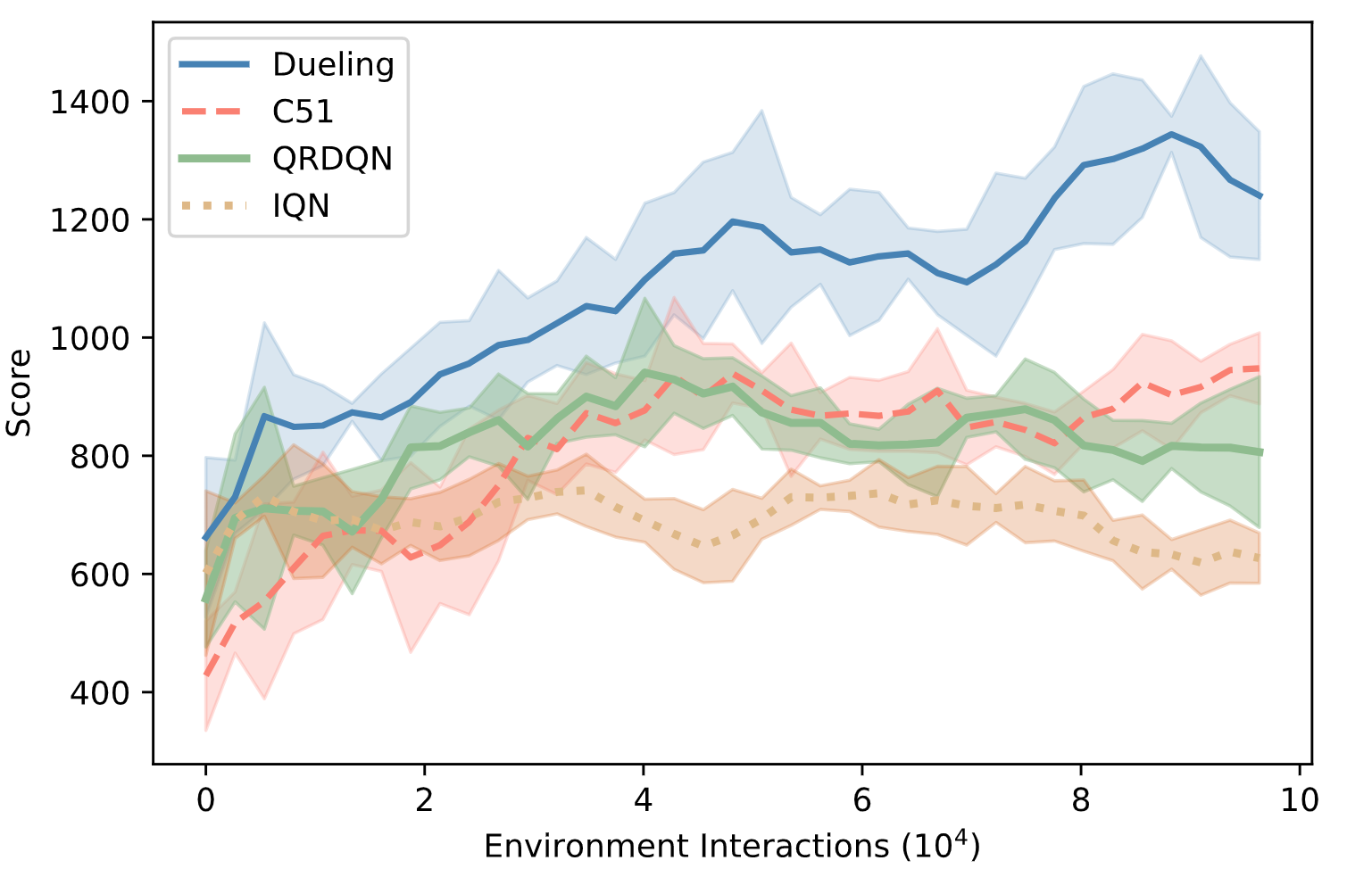}}{\scriptsize{ChopperCommand}}
    \stackunder[1pt]{\includegraphics[scale=0.12]{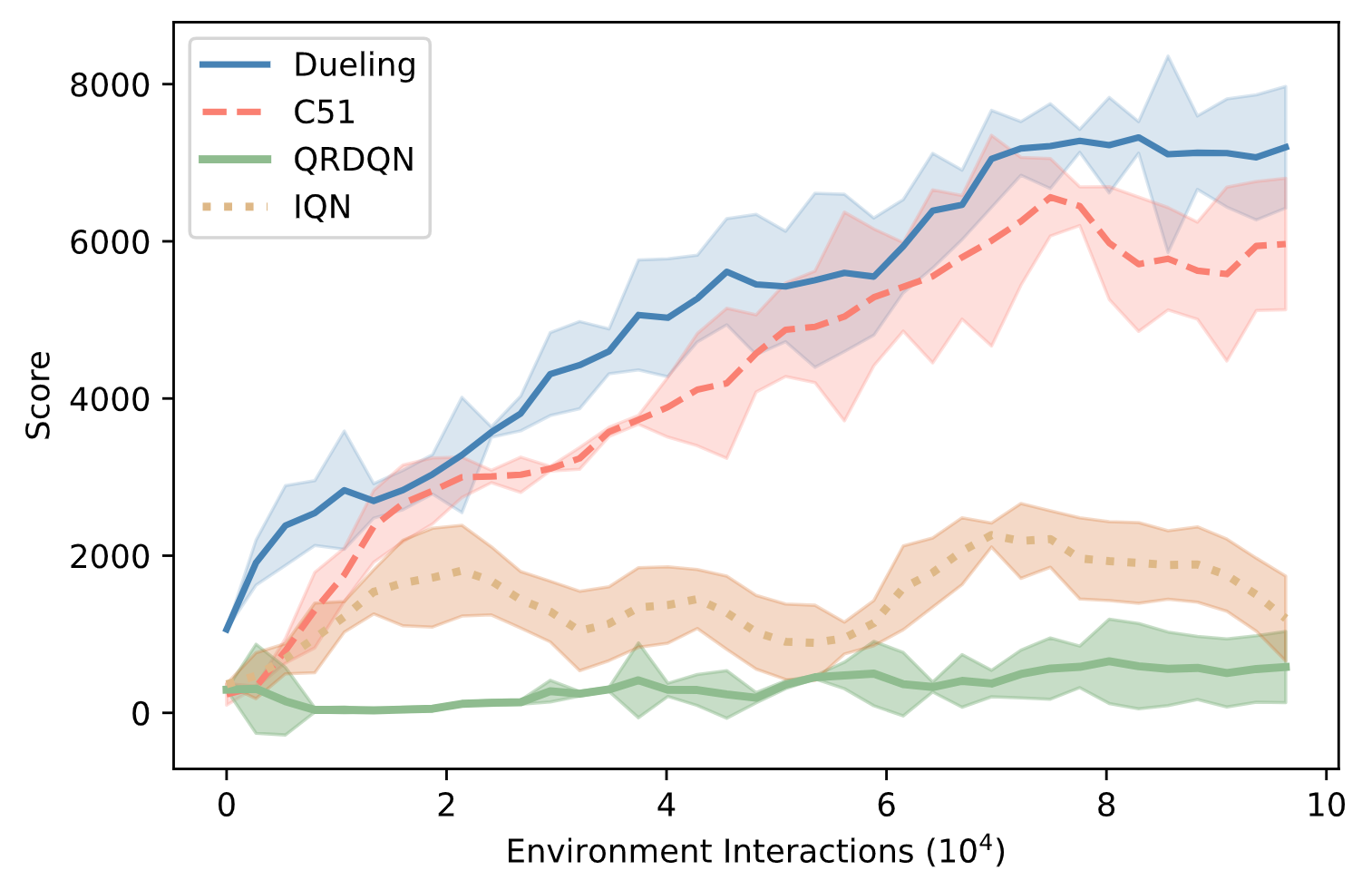}}{\scriptsize{Hero}}
    \stackunder[1pt]{\includegraphics[scale=0.12]{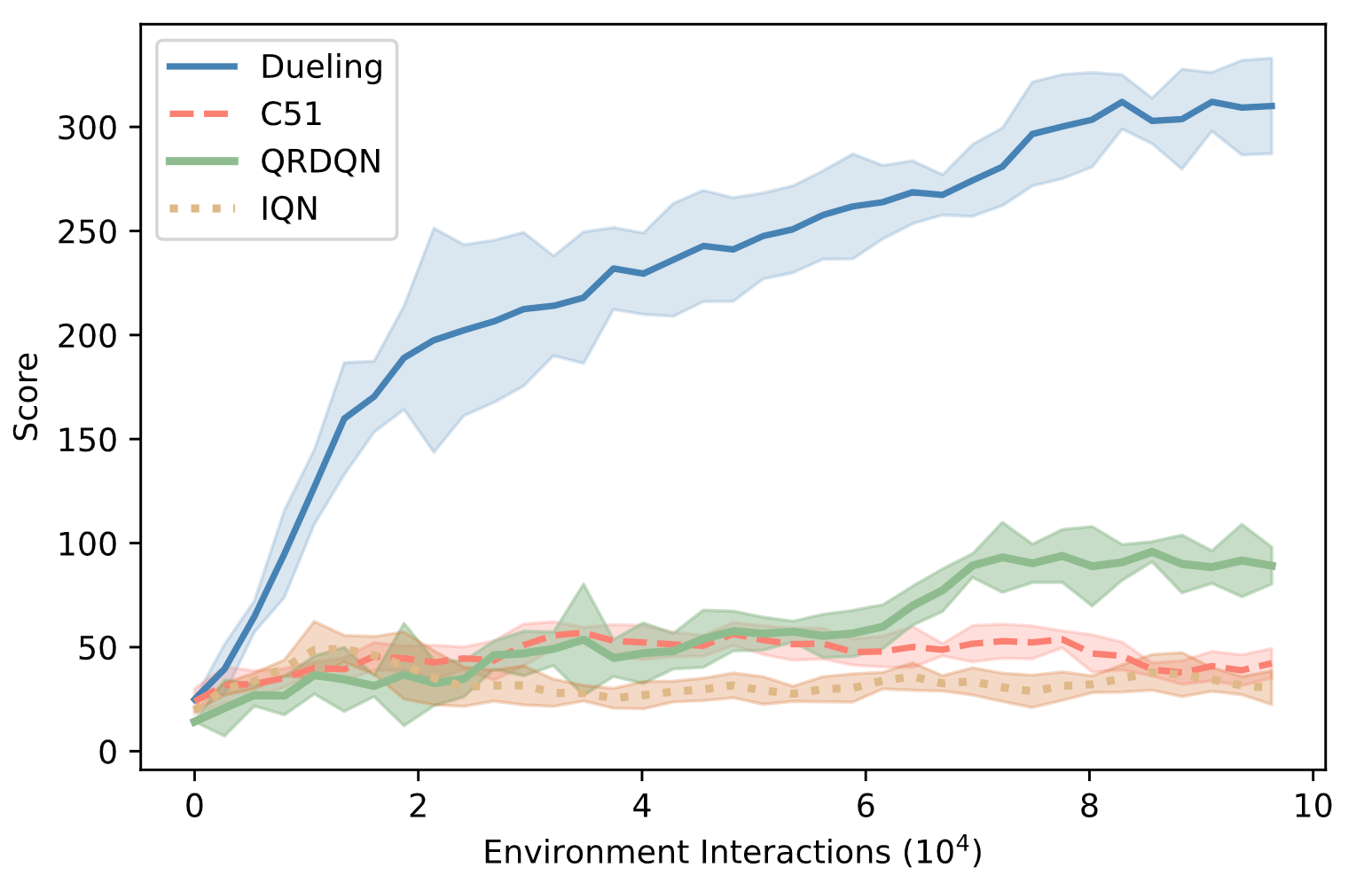}}{\scriptsize{JamesBond}}\\
    \vskip-0.02in
    \stackunder[1pt]{\includegraphics[scale=0.12]{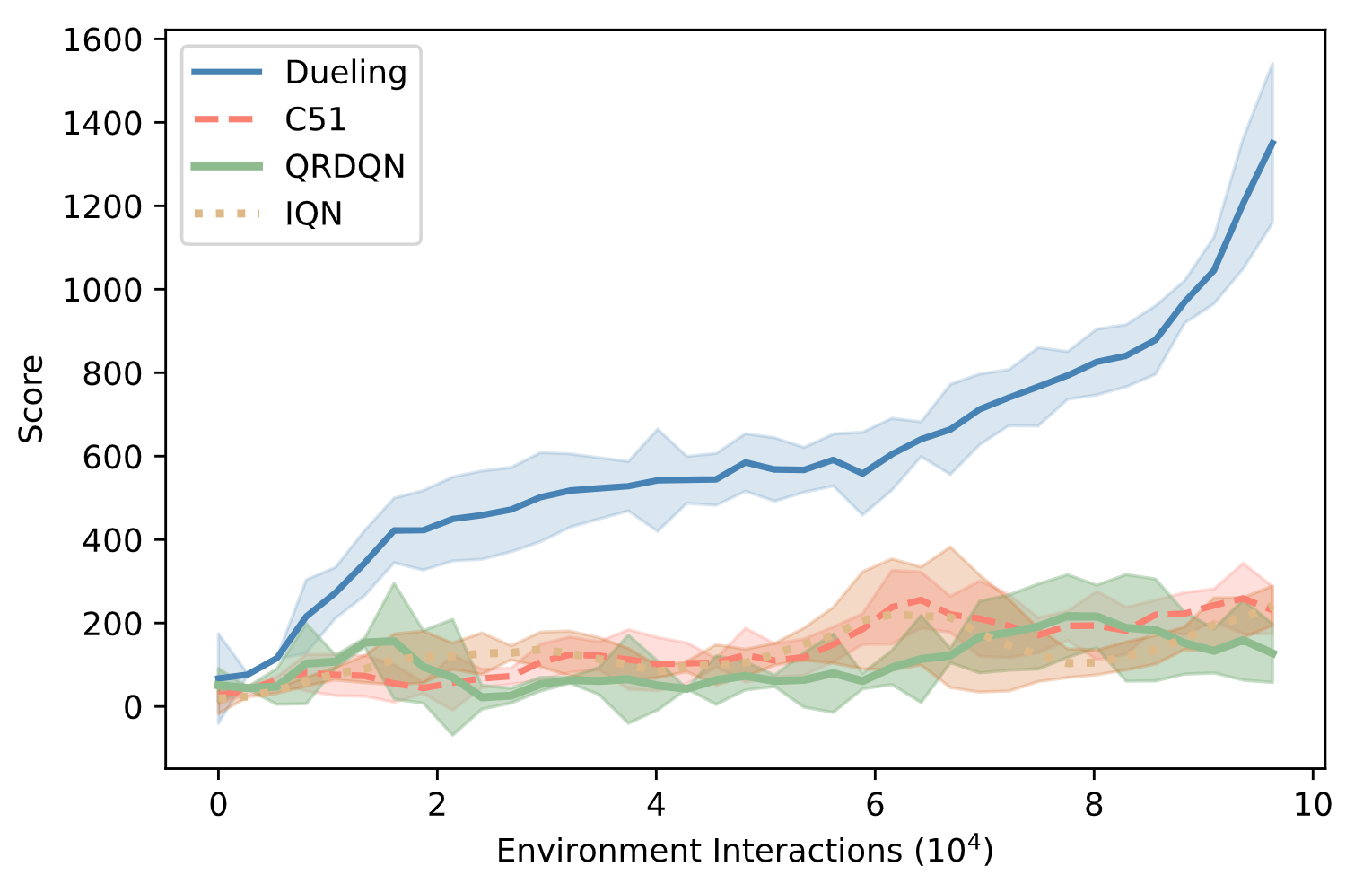}}{\scriptsize{Kangaroo}}
    \stackunder[1pt]{\includegraphics[scale=0.12]{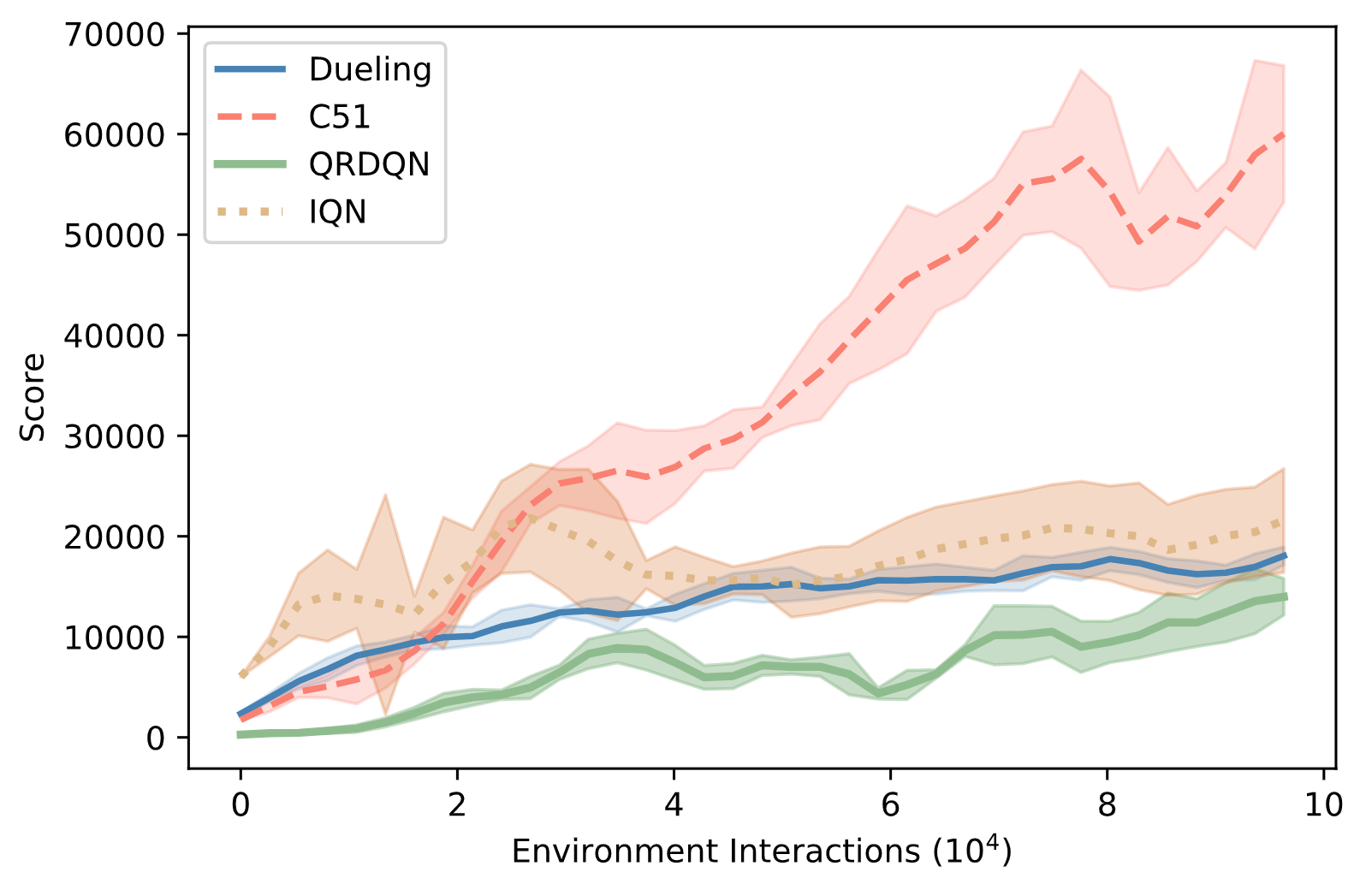}}{\scriptsize{CrazyClimber}}
    \stackunder[1pt]{\includegraphics[scale=0.124]{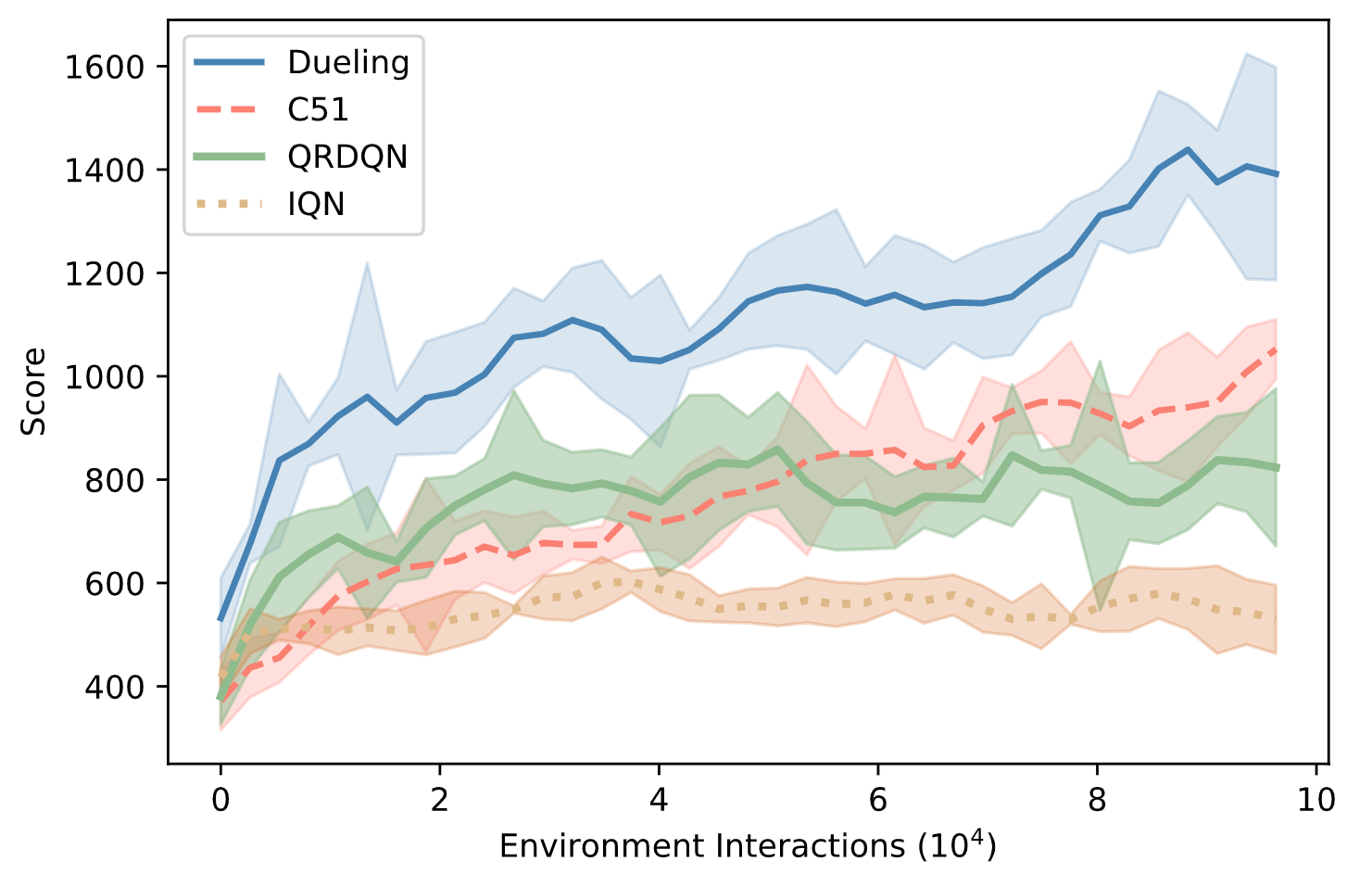}}{\scriptsize{MsPacman}}
    \stackunder[1pt]{\includegraphics[scale=0.12]{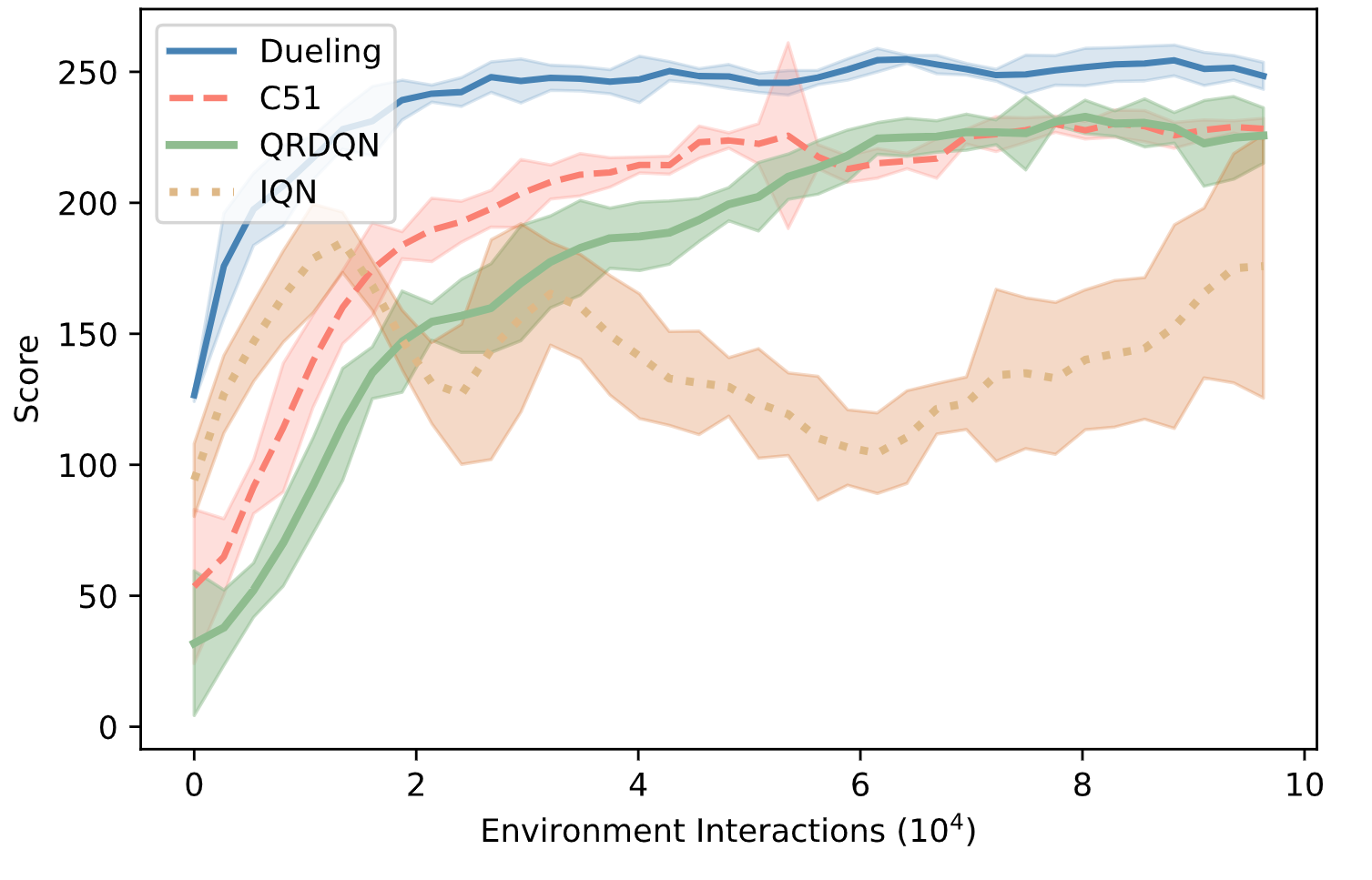}}{\scriptsize{FrostBite}}\\
    \vskip-0.02in
    \stackunder[1pt]{\includegraphics[scale=0.12]{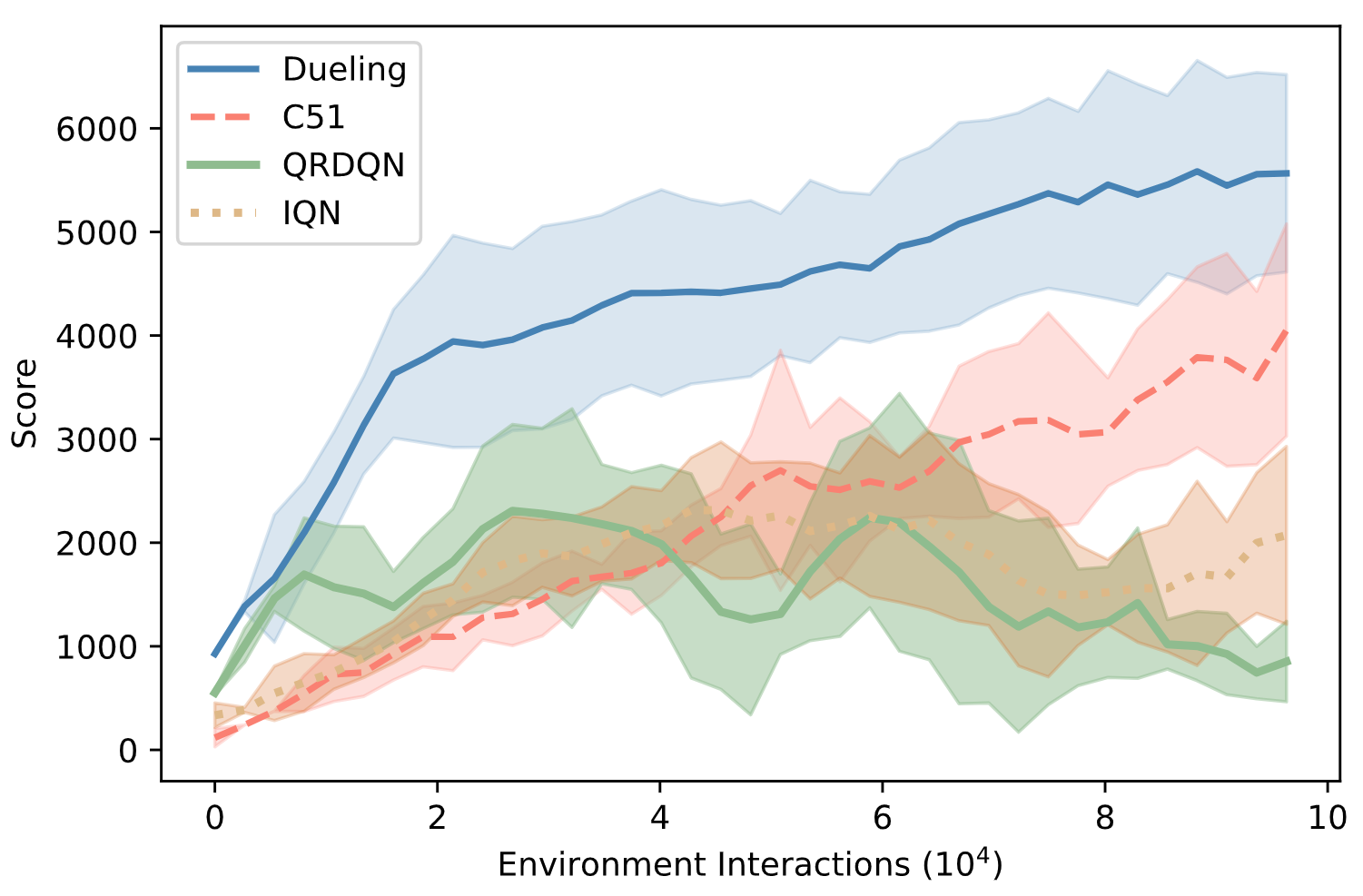}}{\scriptsize{RoadRunner}}
    \stackunder[1pt]{\includegraphics[scale=0.12]{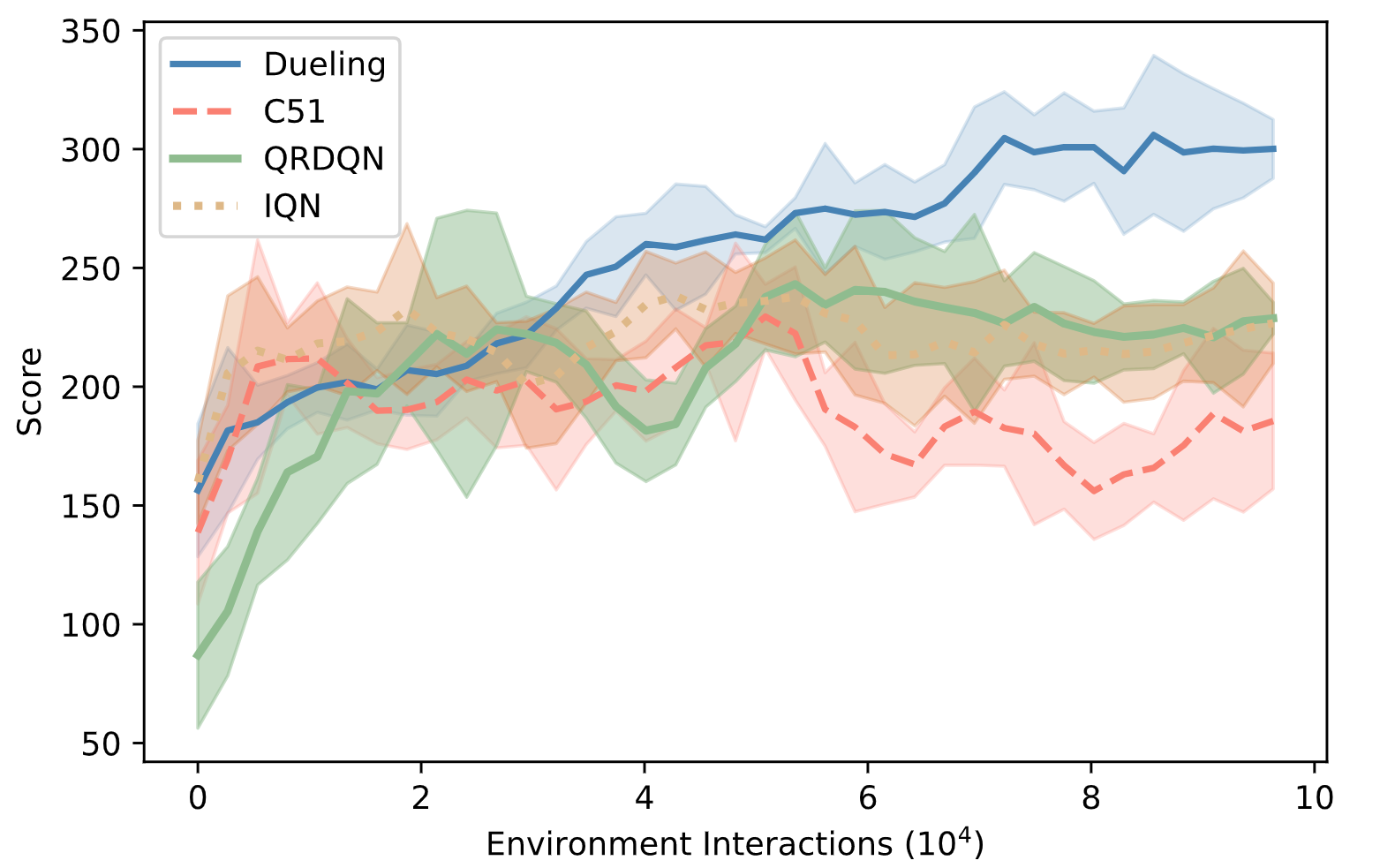}}{\scriptsize{Seaquest}}
    \stackunder[1pt]{\includegraphics[scale=0.12]{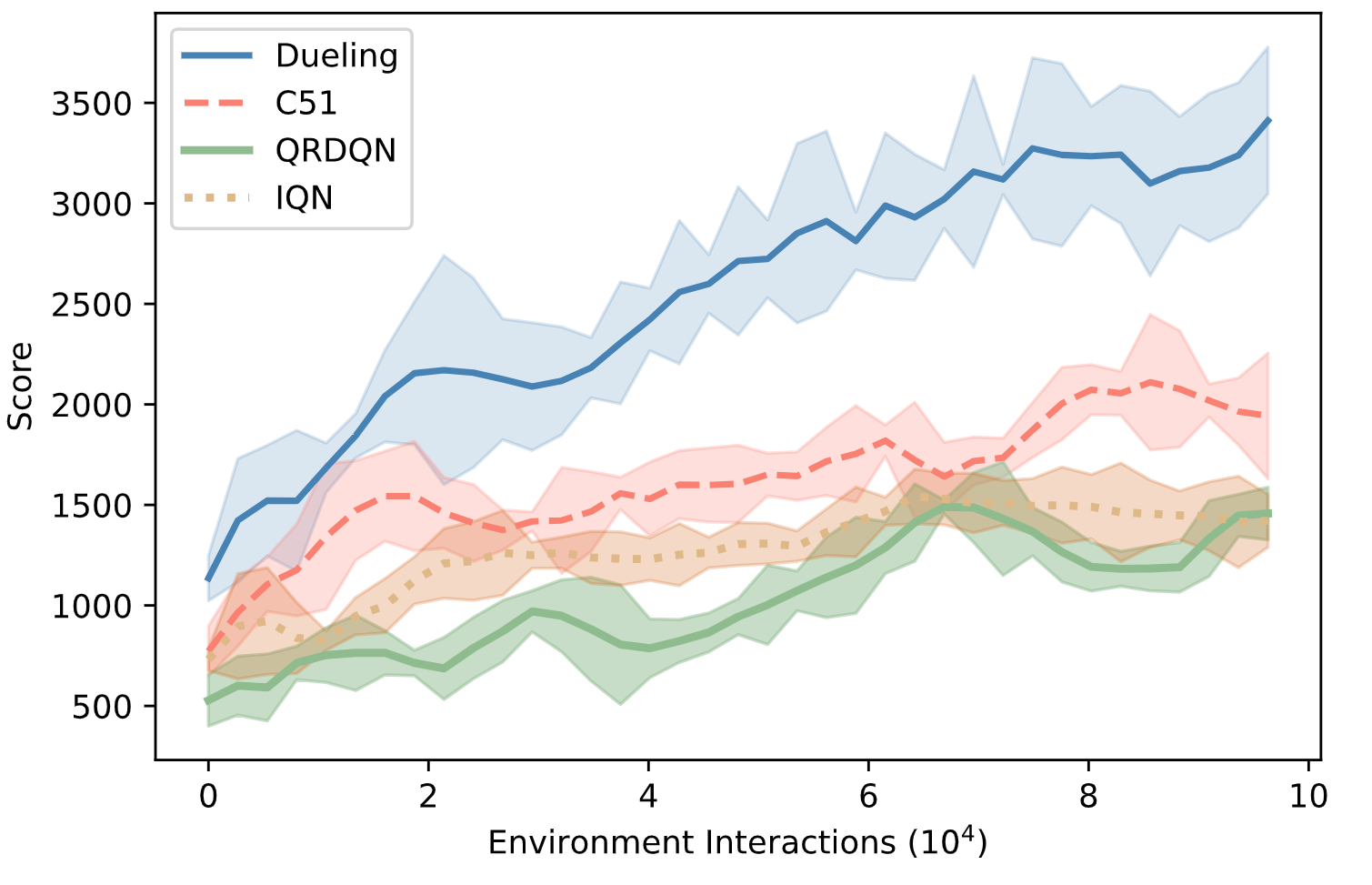}}{\scriptsize{UpNDown}}
    \stackunder[1pt]{\includegraphics[scale=0.12]{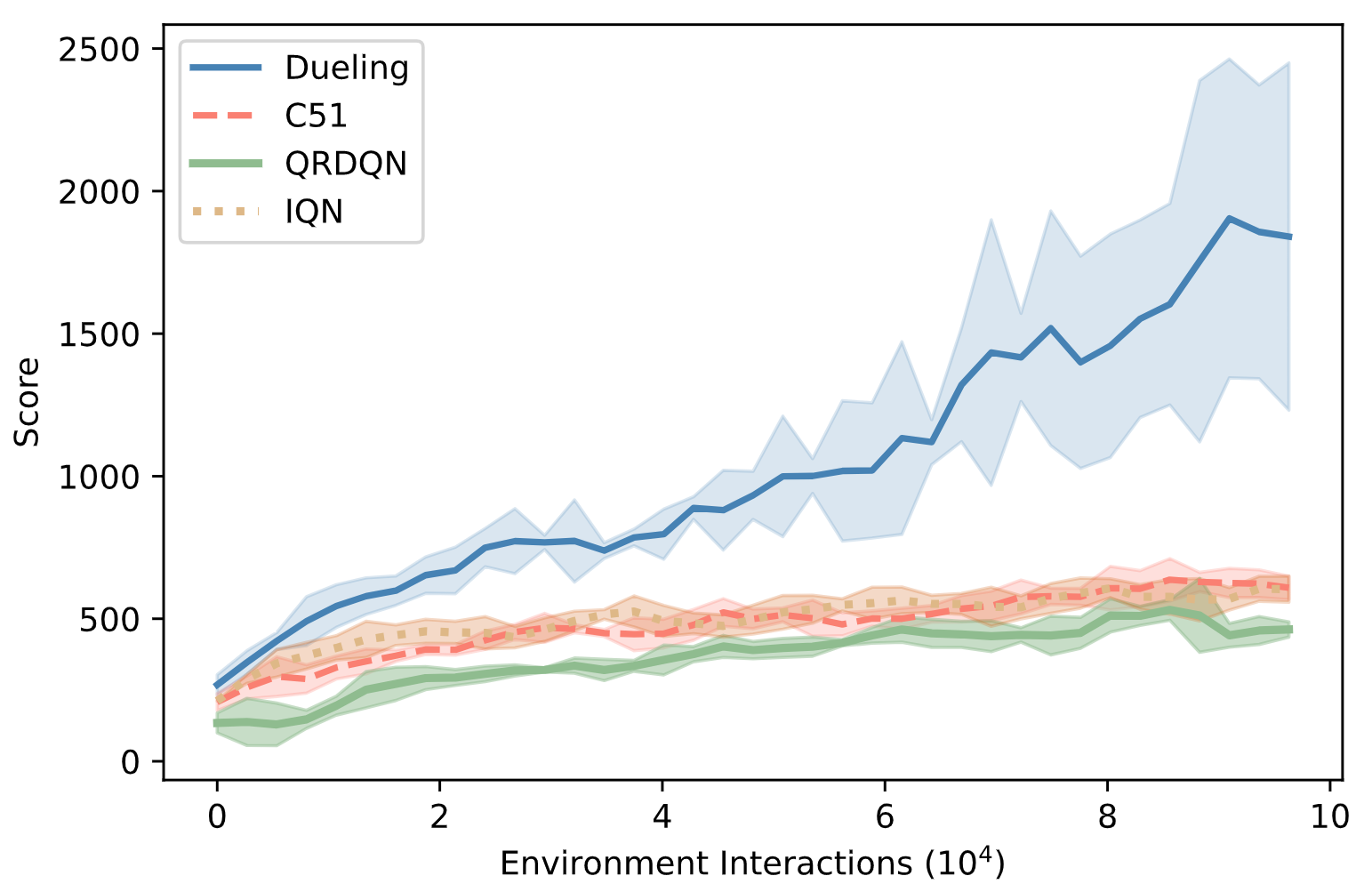}}{\scriptsize{Qbert}}\\
    \end{center}
    \vskip -0.18in
    \caption{The learning curves of Alien, Amidar, Asterix, BankHeist, ChopperCommand, Hero, CrazyClimber, JamesBond, Kangaroo, MsPacman, FrostBite, Qbert, RoadRunner, Seaquest and UpNDown with dueling architecture, C51, I$\mathcal{Q}$N and $\mathcal{Q}$RD$\mathcal{Q}$N algorithms in the Arcade Learning Environment with 100K environment interaction training.}
    \label{100K}
    \end{figure*}

    \vskip0.02in
\noindent \textbf{Implicit Assumptions on Monotonicity Cause Suboptimal and Incorrect Conclusions.}
Our extensive large-scale empirical analysis demonstrates that a major line of research conducted in the past five years resulted in incorrect conclusions. 
We show that a simple baseline algorithm \textbf{from 2016} \cite{wang16}, by a systematic methodological choice was never included in the comparison benchmark, following the implicit assumption that appears in all of the recent line of research that we have discussed in detail in Section \ref{samplecomplexity}.
 We demonstrate that this baseline algorithm in fact performs much better than many recent algorithms that claimed to be better than the baselines, even including algorithms that are specifically built on top of the baseline algorithm.
Figure \ref{100K} reports learning curves for the I$\mathcal{Q}$N, $\mathcal{Q}$RD$\mathcal{Q}$N, dueling and C51 
in the Arcade Learning Environment low-data regime benchmark. 
These results demonstrate that the simple base algorithm dueling performs significantly better
than a series of algorithms that were included in the comparison benchmark which inherently produced higher capacity models 
when the training samples are limited.
Note that DR$\mathcal{Q}$ uses the dueling architecture 
without any high capacity inducing components.
One intriguing takeaway from the results provided in Table \ref{comparison} and Figure \ref{samples}\footnote{DER$^{\textrm{2021}}$ refers to the re-implementation with random seed variations of the original paper data-efficient Rainbow (i.e. DER$^{\textrm{2019}}$) by \citet{hado19}. OTR refers to further implementation of the Rainbow algorithm by \citet{kielak19}.
DR$\mathcal{Q}^{\textrm{NeurIPS}}$ refers to the re-implementation of the original DR$\mathcal{Q}$ algorithm 
with the goal of achieving reproducibility with variation on the number of random seeds \citep{agarwal21}.}
is the fact that the simple baseline dueling algorithm performs 15\% better than the DR$\mathcal{Q}^{\textrm{NeurIPS}}$ implementation, and 11\% less than the DR$\mathcal{Q}^{\textrm{ICLR}}$ implementation instead of 82\% gain reported in the original paper.

\noindent \textbf{Providing Direct Comparison to Core Algorithms.} 
Algorithms that are built on top of a core reinforcement learning algorithm must provide a direct comparison to the algorithm they are built on top of. The case of DR$\mathcal{Q}$ demonstrates the significance of the direct comparison to the core algorithm. As our paper discovers and describes extensively, the monotonicity assumption on the performance ranking across regimes led a line of work to benchmark against certain algorithms in the low-data regime, assuming that if an algorithm has the highest performance in the high-data regime it must have the top-ranked performance in the low-data regime. However, as we pointed out in our theoretical analysis this is a dangerous and incorrect assumption. The results reported in Figure \ref{data} and Figure \ref{samples} demonstrate that these implicit assumptions in fact lead to incorrect and suboptimal conclusions.

\noindent \textbf{Non-Monotonicity of Performance Ranking Across Regimes.}
Table \ref{comparison} reports the 
human normalized median, mean and 20$^\textrm{th}$ percentile 
results over all of the MDPs from the 100K ALE benchmark for D$\mathcal{Q}$N, Double-$\mathcal{Q}$, dueling, C51, $\mathcal{Q}$RD$\mathcal{Q}$N, I$\mathcal{Q}$N and Prior. One important takeaway from the results reported in the Table \ref{comparison} is the fact that one particular algorithm performance profile in 200 million frame training will not directly transfer to the low-data region as predicted by our theoretical analysis in Section \ref{theorysec}.
Figure \ref{all} reports the learning curves of 
human normalized median, mean and 20$^\textrm{th}$ percentile 
for the dueling algorithm, C51, $\mathcal{Q}$RD$\mathcal{Q}$N, and I$\mathcal{Q}$N.
These results once more demonstrate that the performance profile of the simple base algorithm dueling is significantly better than 
any core algorithm 
which inherently produced higher capacity models
that was included in the comparison benchmark of the extensive low-data regime literature 
when the number of environment interactions are limited.

\begin{figure*}[t]
\footnotesize
\vskip -0.1in
\begin{center}
\stackunder[1pt]{\includegraphics[scale=0.193]{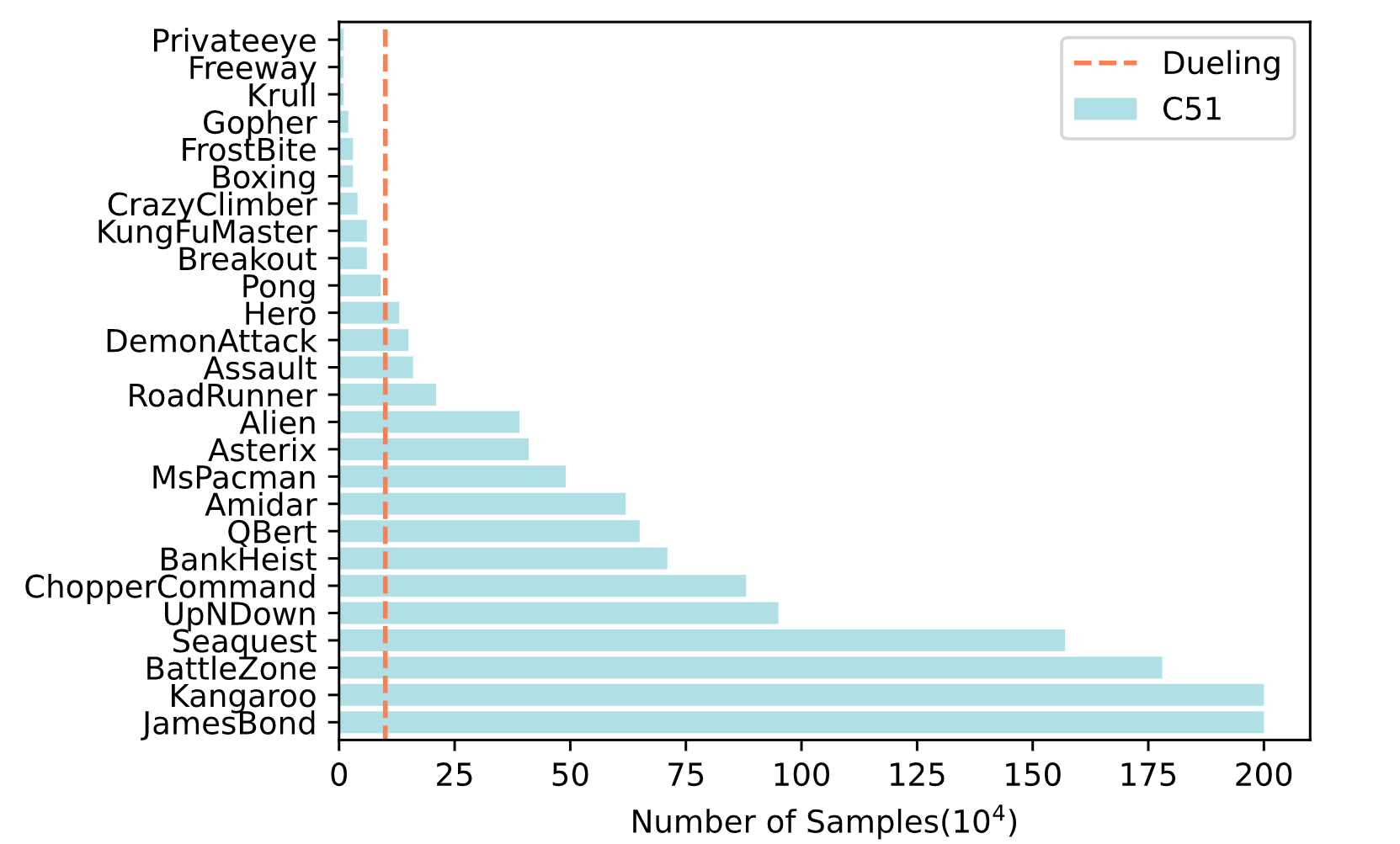}}{Sample Complexity of C51}
\hskip-0.14in
\stackunder[1pt]{\includegraphics[scale=0.199]{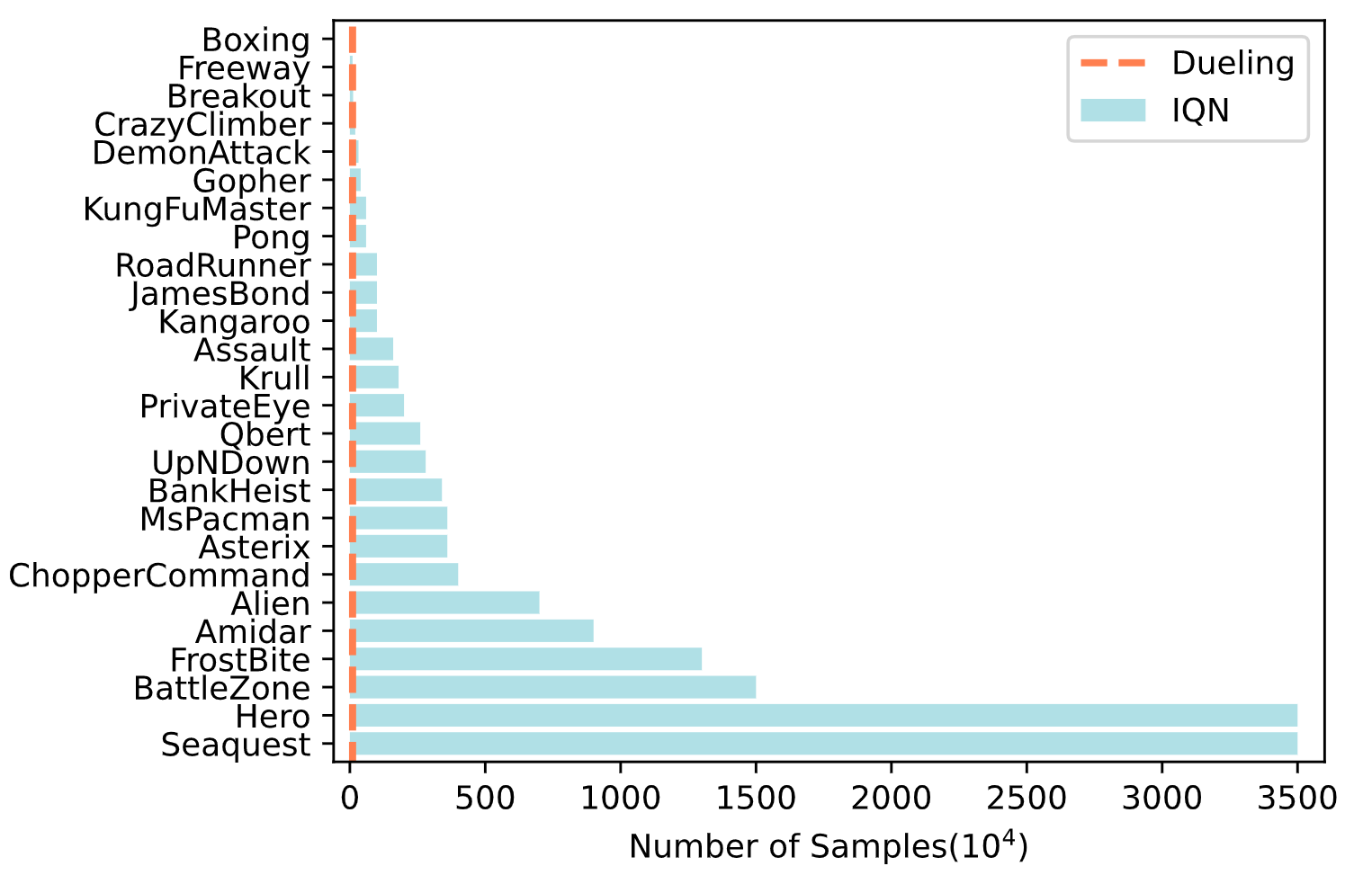}}{Sample Complexity of I$\mathcal{Q}$N}
\stackunder[1pt]{\includegraphics[scale=0.182]{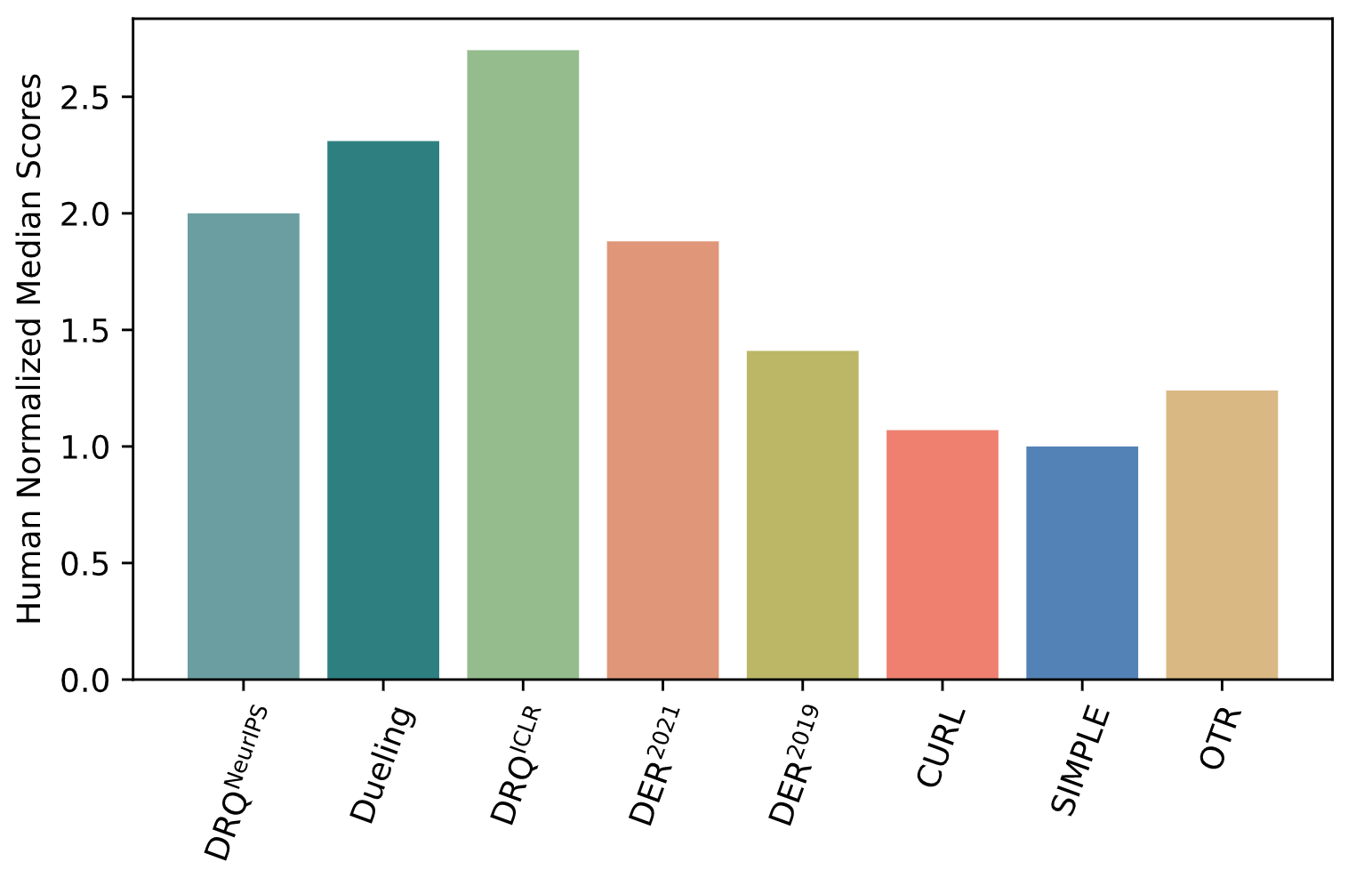}}{Overall Comparison}
\hskip 0.1pt
\end{center}
\vskip -0.18in
\caption{Left: Number of samples, i.e. environment interactions, required by 
the base algorithms that inherently produce higher capacity models
to achieve the performance level achieved by the dueling algorithm. Center: Number of samples required by I$\mathcal{Q}$N to achieve the performance level achieved by dueling. Right: Overall comparison of algorithms recently developed in the low-data regime ALE 100K benchmark to the dueling algorithm that was designed in the high-data region.}
\label{samples}
\end{figure*}
\noindent \textbf{Biases in the Evaluation Criteria.}
The original paper of the DR$\mathcal{Q}^{\textrm{ICLR}}$ algorithm \citep{yarats21} benchmarks against data-efficient Rainbow (DER) \citep{hado19} which inherently learns a higher capacity model.
Our results show that the fact that the original paper that proposed data augmentation for reinforcement learning, i.e. DR$\mathcal{Q}^{\textrm{ICLR}}$, on top of the dueling algorithm did not provide comparisons 
against the core algorithm that they are built on, i.e. dueling \citep{wang16},
resulted in inflated performance profiles for the DR$\mathcal{Q}^{\textrm{ICLR}}$ algorithm.
For a fair, direct and transparent comparison we kept the hyperparameters for the baseline algorithms in the low-data regime exactly the same with the DR$\mathcal{Q}^{\textrm{ICLR}}$ paper (see supplementary material for the full list and high-data regime hyperparameter settings).
More intriguingly, the comparisons provided in the DR$\mathcal{Q}^{\textrm{ICLR}}$ paper to the DER and OTR algorithms report the performance gained by DR$\mathcal{Q}^{\textrm{ICLR}}$ over DER is 82\% and over OTR is 35\%.
However, if a direct comparison is made to the simple dueling algorithm as Table \ref{comparison} demonstrates 
the performance gain is utterly restricted to \textbf{11\%}.
Moreover, when it is compared to the reproduced results of DR$\mathcal{Q}^{\textrm{NeurIPS}}$ our results reveal that in fact 
there is a performance decrease due to utilizing DR$\mathcal{Q}$ over dueling.
Thus, while our paper introduces the foundations on the non-monotonicity of the performance profiles from large-data regime to low-data regime, it further provides the basis on how we can compare algorithms with 
a principled approach and scientific rigor 
allowing more concrete and accurate evaluation across data-regimes.

\noindent \textbf{Theoretical Analysis and the Inherent Bellman Error vs Dimensionality.}
The right and center plots of Figure \ref{data} report regret curves corresponding to the theoretical analysis in Theorem \ref{prop:lowdatavsasymp} for various choices of the feature dimensionality $d$ and the inherent Bellman error $\mathcal{I}$. In particular, the center plot shows the low-data regime where the number of episodes $\mathcal{K} < 1000$, while the right plot shows the high-data regime where $\mathcal{K}$ is as large as $500000$. Notably, the relative ordering of the regret across the different choices of $d$ and $\mathcal{I}$ is completely reversed in the high-data regime when compared to the low-data regime.
Recall from Theorem \ref{thm:optregretalg} that the inherent Bellman error is a measure of the accuracy of function approximation under the Bellman operator corresponding to an MDP. Thus, the varying values of $\mathcal{I}$ and $d$ in Figure \ref{data} correspond to a natural setting where increasing the number of model parameters (i.e. increasing $d$) corresponds to an increase in the accuracy of function approximation (i.e. a decrease in $\mathcal{I}$).
Thus the results reported in Figure \ref{data} demonstrate that, even in the natural setting where increased model capacity leads to increased accuracy, there can be a complete reversal in the ordering of algorithm performance between the low and high-data regimes.
Figure \ref{samples} reports results on the number of samples required for training with the baseline algorithm 
that inherently produces higher capacity models
to reach the same performance levels achieved by the dueling algorithm for every MDP from ALE low-data regime benchmark. 
These results once more demonstrate that to reach the same performance levels with the dueling algorithm, baseline algorithms that
inherently learn higher capacity models
require orders of magnitude more samples to train on. 
As discussed in Section \ref{samplecomplexity}, more complex representations for broader classes of distributions come at the cost of a higher sample complexity required for learning.
One intriguing fact is that the original SimPLE paper in the low-data regime benchmarked against the Rainbow algorithm  
which is essentially a higher capacity model designed in the high-data regime 
by having the implicit assumption that the state-of-the art performance profile must transfer monotonically to the low-data regime. These instances of implicit assumptions also occur in DR$\mathcal{Q}^{\textrm{ICLR}}$, CURL, SPR and Efficient-Zero \cite{ye21} even when comparisons are made for more advanced algorithms such as MuZero.

\noindent \textbf{Datasets are Created and Founded on Implicit Assumptions.}
Thus far we have discussed the pivotal role of implicit assumptions on the algorithmic comparisons and developing baselines in deep reinforcement learning. However, this issue further extends back to even how the entire low-data regime benchmark was established, i.e. ALE 100K. The ALE 100K was initially created to allow researchers to work on a subset of games instead of full set of games used in the high-data regime \citep{kaiser20}, and this benchmark is currently used by any algorithm developed for the low data regime. However, the entire ALE 100K benchmark was in fact built on the selection bias of choosing games that performed better either with the proposed algorithm of the paper that proposed the entire benchmark \citep{kaiser20}, or with Rainbow, which we extensively demonstrated throughout the paper is an algorithm that is subjected to the implicit assumption bias on monotonicity across regimes. Thus the issues we explicitly discover and analyze in our paper are not limited to baselines but further extend to canonical benchmarks that we evaluate reinforcement learning algorithms on.
Our paper discovers that the canonical methodological choices made in a major line of deep reinforcement learning research that is based on these implicit assumptions, give incorrect signals on why and what makes these algorithms work, 
and hence affect future research directions while misdirecting the possible current research efforts from ideas that could have worked during the algorithm design process.

\section{Conclusion}

In this paper we aimed to answer the following questions: 
\emph{(i) How are the scaling laws of reinforcement learning formally characterized with respect to capacity and complexity?}
\emph{(ii) What are the canonical methodological choices that fundamentally affect the progress in deep reinforcement learning research?} and
\emph{(iii) What is the underlying theoretical relationship between monotonicity, the performance profiles and sample complexity regimes?}
To be able to answer these questions we provide theoretical analysis on the sample complexity of the baseline deep reinforcement learning algorithms.
We conduct extensive experiments both in the low-data regime 100K Arcade Learning Environment and high-data regime baseline 200 million frame training.
Both theoretical and empirical analysis provided in our paper demonstrate that the performance profiles of deep reinforcement learning algorithms do not have a monotonic relationship across sample complexity regimes.
Our analysis reveals that the underlying assumption of the monotonic relationship of the performance characteristics and the sample complexity regimes is currently present in a major line of research including many recent state-of-the-art studies and this implicit assumption led these studies to result in incorrect conclusions. 
Our paper demonstrates that several baseline $\mathcal{Q}$ algorithms perform better than a line of recent algorithms claimed to be the state-of-the-art.
Our paper establishes a principled analysis of deep reinforcement learning that characterizes the fundamental relationship between
scaling, capacity and complexity.

\bibliography{example_paper}

@article{julian20,
  author       = {Julian Schrittwieser and
                  Ioannis Antonoglou and
                  Thomas Hubert and
                  Karen Simonyan and
                  Laurent Sifre and
                  Simon Schmitt and
                  Arthur Guez and
                  Edward Lockhart and
                  Demis Hassabis and
                  Thore Graepel and
                  Timothy P. Lillicrap and
                  David Silver},
  title        = {Mastering Atari, Go, chess and shogi by planning with a learned model},
  journal      = {Nat.},
  volume       = {588},
  number       = {7839},
  pages        = {604--609},
  year         = {2020},
  url          = {https://doi.org/10.1038/s41586-020-03051-4},
  doi          = {10.1038/S41586-020-03051-4},
  bibsource    = {dblp computer science bibliography, https://dblp.org}
}

@inproceedings{kaptur23,
  author       = {Steven Kapturowski and
                  Victor Campos and
                  Ray Jiang and
                  Nemanja Rakicevic and
                  Hado van Hasselt and
                  Charles Blundell and
                  Adri{\`{a}} Puigdom{\`{e}}nech Badia},
  title        = {Human-level Atari 200x faster},
  booktitle    = {The Eleventh International Conference on Learning Representations,
                  {ICLR} 2023, Kigali, Rwanda, May 1-5, 2023},
  year         = {2023},
}

@inproceedings{korkmaz24icml,
  author       = {Ezgi Korkmaz},
  title        = {Understanding and {D}iagnosing {D}eep {R}einforcement {L}earning},
  booktitle    = {International Conference on Machine Learning, {ICML} 2024},
  year         = {2024},
}

@inproceedings{kaiser20,
  author       = {Lukasz Kaiser and
                  Mohammad Babaeizadeh and
                  Piotr Milos and
                  Blazej Osinski and
                  Roy H. Campbell and
                  Konrad Czechowski and
                  Dumitru Erhan and
                  Chelsea Finn and
                  Piotr Kozakowski and
                  Sergey Levine and
                  Afroz Mohiuddin and
                  Ryan Sepassi and
                  George Tucker and
                  Henryk Michalewski},
  title        = {Model Based Reinforcement Learning for Atari},
  booktitle    = {8th International Conference on Learning Representations, {ICLR} 2020 {[Spotlight Presentation]}},
  year         = {2020},
}

@inproceedings{dab18,
  author    = {Will Dabney and
               Georg Ostrovski and
               David Silver and
               R{\'{e}}mi Munos},
  editor    = {Jennifer G. Dy and
               Andreas Krause},
  title     = {Implicit Quantile Networks for Distributional Reinforcement Learning},
  booktitle = {Proceedings of the 35th International Conference on Machine Learning,
               {ICML} 2018, Stockholmsm{\"{a}}ssan, Stockholm, Sweden, July
               10-15, 2018},
  series    = {Proceedings of Machine Learning Research},
  volume    = {80},
  pages     = {1104--1113},
  publisher = {{PMLR}},
  year      = {2018},
}

@inproceedings{hessel21,
  author    = {Matteo Hessel and
               Ivo Danihelka and
               Fabio Viola and
               Arthur Guez and
               Simon Schmitt and
               Laurent Sifre and
               Theophane Weber and
               David Silver and
               Hado van Hasselt},
  editor    = {Marina Meila and
               Tong Zhang},
  title     = {Muesli: Combining Improvements in Policy Optimization},
  booktitle = {Proceedings of the 38th International Conference on Machine Learning,
               {ICML} 2021, 18-24 July 2021, Virtual Event},
  series    = {Proceedings of Machine Learning Research},
  volume    = {139},
  pages     = {4214--4226},
  publisher = {{PMLR}},
  year      = {2021},
}

@article{kielak19,
author = {Kacper Piotr Kielak},
journal = {CoRR},
title = {Do recent advancements in model-based deep reinforcement learning really improve data efficiency?},
year = {2019}
}

@article{korkmaz25,
author = {Ezgi Korkmaz},
journal = {Advances in Neural Information Processing Systems 39: Annual Conference
                  on Neural Information Processing Systems 2025, {NeurIPS} 2025 {[Spotlight Presentation]}},
title = {Counteractive RL: Rethinking Core Principles for Efficient and Scalable Deep Reinforcement Learning},
year = {2025}
}

@inproceedings{agarwal21,
  author       = {Rishabh Agarwal and
                  Max Schwarzer and
                  Pablo Samuel Castro and
                  Aaron C. Courville and
                  Marc G. Bellemare},
  editor       = {Marc'Aurelio Ranzato and
                  Alina Beygelzimer and
                  Yann N. Dauphin and
                  Percy Liang and
                  Jennifer Wortman Vaughan},
  title        = {Deep Reinforcement Learning at the Edge of the Statistical Precipice},
  booktitle    = {Advances in Neural Information Processing Systems 34: Annual Conference
                  on Neural Information Processing Systems 2021, NeurIPS 2021},
  pages        = {29304--29320},
  year         = {2021},
}

@inproceedings{will18,
  author    = {Will Dabney and
               Mark Rowland and
               Marc G. Bellemare and
               R{\'{e}}mi Munos},
  editor    = {Sheila A. McIlraith and
               Kilian Q. Weinberger},
  title     = {Distributional Reinforcement Learning With Quantile Regression},
  booktitle = {Proceedings of the Thirty-Second {AAAI} Conference on Artificial Intelligence,
               (AAAI-18), the 30th innovative Applications of Artificial Intelligence
               (IAAI-18), and the 8th {AAAI} Symposium on Educational Advances in
               Artificial Intelligence (EAAI-18), New Orleans, Louisiana, USA, February
               2-7, 2018},
  pages     = {2892--2901},
  publisher = {{AAAI} Press},
  year      = {2018},
}

@inproceedings{ZanetteLKB20,
  author    = {Andrea Zanette and
               Alessandro Lazaric and
               Mykel J. Kochenderfer and
               Emma Brunskill},
  title     = {Learning Near Optimal Policies with Low Inherent Bellman Error},
  booktitle = {Proceedings of the 37th International Conference on Machine Learning,
               {ICML} 2020, 13-18 July 2020, Virtual Event},
  series    = {Proceedings of Machine Learning Research},
  volume    = {119},
  pages     = {10978--10989},
  publisher = {{PMLR}},
  year      = {2020},
  url       = {http://proceedings.mlr.press/v119/zanette20a.html},
  bibsource = {dblp computer science bibliography, https://dblp.org}
}

@inproceedings{yarats21,
  author    = {Denis Yarats and
               Ilya Kostrikov and
               Rob Fergus},
  title     = {Image Augmentation Is All You Need: Regularizing Deep Reinforcement
               Learning from Pixels},
  booktitle = {9th International Conference on Learning Representations, {ICLR} 2021,
               Virtual Event, Austria, May 3-7, 2021 {[Spotlight Presentation]}},
  year      = {2021},
}

@inproceedings{hado19,
  author    = {Hado van Hasselt and
               Matteo Hessel and
               John Aslanides},
  editor    = {Hanna M. Wallach and
               Hugo Larochelle and
               Alina Beygelzimer and
               Florence d'Alch{\'{e}}{-}Buc and
               Emily B. Fox and
               Roman Garnett},
  title     = {When to use parametric models in reinforcement learning?},
  booktitle = {Advances in Neural Information Processing Systems 32: Annual Conference
               on Neural Information Processing Systems 2019, NeurIPS 2019, December
               8-14, 2019, Vancouver, BC, Canada},
  pages     = {14322--14333},
  year      = {2019},
}

@inproceedings{hado18,
  author    = {Matteo Hessel and
               Joseph Modayil and
               Hado van Hasselt and
               Tom Schaul and
               Georg Ostrovski and
               Will Dabney and
               Dan Horgan and
               Bilal Piot and
               Mohammad Gheshlaghi Azar and
               David Silver},
  editor    = {Sheila A. McIlraith and
               Kilian Q. Weinberger},
  title     = {Rainbow: Combining Improvements in Deep Reinforcement Learning},
  booktitle = {Proceedings of the Thirty-Second {AAAI} Conference on Artificial Intelligence,
               (AAAI-18), the 30th innovative Applications of Artificial Intelligence
               (IAAI-18), and the 8th {AAAI} Symposium on Educational Advances in
               Artificial Intelligence (EAAI-18), New Orleans, Louisiana, USA, February
               2-7, 2018},
  pages     = {3215--3222},
  publisher = {{AAAI} Press},
  year      = {2018},
}

@article{tom16,
author = {Tom Schaul and John Quan and Ioannis Antonogloua and David Silver},
journal = {International Conference on Learning Representations (ICLR)},
title = {Prioritized Experience Replay},
year = {2016}
}

@article{hado16,
author = {Hasselt, Hado van and Arthur Guez and David Silver},
journal = {Association for the Advancement of Artificial Intelligence (AAAI)},
title = {Deep Reinforcement Learning with Double Q-Learning},
year = {2016}
}

@software{jax2,
  title = {The {D}eep{M}ind {E}cosystem},
  author = {Babuschkin, Igor and Baumli, Kate and Bell, Alison and Bhupatiraju, Surya and Bruce, Jake and Buchlovsky, Peter and Budden, David and Cai, Trevor and Clark, Aidan and Danihelka, Ivo and Fantacci, Claudio and Godwin, Jonathan and Jones, Chris and Hennigan, Tom and Hessel, Matteo and Kapturowski, Steven and Keck, Thomas and Kemaev, Iurii and King, Michael and Martens, Lena and Merzic, Hamza and Mikulik, Vladimir and Norman, Tamara and Quan, John and Papamakarios, George and Ring, Roman and Ruiz, Francisco and Sanchez, Alvaro and Schneider, Rosalia and Sezener, Eren and Spencer, Stephen and Srinivasan, Srivatsan and Stokowiec, Wojciech and Viola, Fabio},
  url = {http://github.com/deepmind},
  year = {2020},
}

@software{optax,
  author = {Matteo Hessel and David Budden and Fabio Viola and Mihaela Rosca
            and Eren Sezener and Tom Hennigan},
  title = {Optax: composable gradient transformation and optimisation},
  url = {http://github.com/deepmind/optax},
  version = {0.1.3},
  year = {2020},
}

@inproceedings{nisan20,
  author       = {Nisan Stiennon and
                  Long Ouyang and
                  Jeffrey Wu and
                  Daniel M. Ziegler and
                  Ryan Lowe and
                  Chelsea Voss and
                  Alec Radford and
                  Dario Amodei and
                  Paul F. Christiano},
  editor       = {Hugo Larochelle and
                  Marc'Aurelio Ranzato and
                  Raia Hadsell and
                  Maria{-}Florina Balcan and
                  Hsuan{-}Tien Lin},
  title        = {Learning to summarize with human feedback},
  booktitle    = {Advances in Neural Information Processing Systems 33: Annual Conference
                  on Neural Information Processing Systems 2020, NeurIPS 2020, December
                  6-12, 2020, virtual},
  year         = {2020},
}

@inproceedings{lee24,
  author       = {Harrison Lee and
                  Samrat Phatale and
                  Hassan Mansoor and
                  Thomas Mesnard and
                  Johan Ferret and
                  Kellie Lu and
                  Colton Bishop and
                  Ethan Hall and
                  Victor Carbune and
                  Abhinav Rastogi and
                  Sushant Prakash},
  title        = {{RLAIF} vs. {RLHF:} Scaling Reinforcement Learning from Human Feedback
                  with {AI} Feedback},
  booktitle    = {Forty-first International Conference on Machine Learning, {ICML} 2024,
                  Vienna, Austria, July 21-27, 2024},
  year         = {2024},
}

@inproceedings{hasselt10,
  author    = {Hado van Hasselt},
  editor    = {John D. Lafferty and
               Christopher K. I. Williams and
               John Shawe{-}Taylor and
               Richard S. Zemel and
               Aron Culotta},
  title     = {Double Q-learning},
  booktitle = {Advances in Neural Information Processing Systems 23: 24th Annual
               Conference on Neural Information Processing Systems 2010. Proceedings
               of a meeting held 6-9 December 2010, Vancouver, British Columbia,
               Canada},
  pages     = {2613--2621},
  publisher = {Curran Associates, Inc.},
  year      = {2010},
}

@inproceedings{ye21,
  author       = {Weirui Ye and
                  Shaohuai Liu and
                  Thanard Kurutach and
                  Pieter Abbeel and
                  Yang Gao},
  title        = {Mastering Atari Games with Limited Data},
  booktitle    = {Advances in Neural Information Processing Systems 34: Annual Conference
                  on Neural Information Processing Systems 2021, NeurIPS 2021, December
                  6-14, 2021,},
  year         = {2021},
}

@article{mn15,
author = {Mnih, Volodymyr and Kavukcuoglu, Koray and Silver, David and Rusu, Andrei A and Veness, Joel and Bellemare, arc G and Graves, Alex and Riedmiller, Martin and Fidjeland, Andreas and Ostrovski, Georg and Petersen, Stig and Beattie, Charles and Sadik, Amir and Antonoglou and King, Helen and Kumaran, Dharshan and Wierstra, Daan and Legg, Shane and Hassabis, Demis  },
journal = {Nature},
pages = {529–533},
title = {Human-level control through deep reinforcement learning},
volume = {518},
year = {2015}
}

@article{wang16,
author = { Wang, Ziyu and Schaul, Tom and Hessel, Matteo and Van Hasselt, Hado and Lanctot, Marc and De Freitas, Nando.},
journal = {Internation Conference on Machine Learning ICML.},
title = {Dueling network architectures for deep reinforcement learning.},
year = {2016},
pages={1995–2003},
}

@inproceedings{bell17,
  author    = {Marc G. Bellemare and
               Will Dabney and
               R{\'{e}}mi Munos},
  title     = {A Distributional Perspective on Reinforcement Learning},
  booktitle = {Proceedings of the 34th International Conference on Machine Learning,
               {ICML} 2017, Sydney, NSW, Australia, 6-11 August 2017},
  series    = {Proceedings of Machine Learning Research},
  volume    = {70},
  pages     = {449--458},
  publisher = {{PMLR}},
  year      = {2017},
}

\end{document}